\newcommand{\bA}{\boldsymbol{A}}
\newcommand{\bG}{\boldsymbol{G}}
\newcommand{\bK}{\boldsymbol{K}}
\newcommand{\ba}{\boldsymbol{a}}
\newcommand{\bb}{\boldsymbol{b}}
\newcommand{\bhatb}{\boldsymbol{\hat{b}}}
\newcommand{\bx}{\boldsymbol{x}}
\newcommand{\bX}{\boldsymbol{X}}
\newcommand{\bI}{\boldsymbol{I}}
\newcommand{\by}{\boldsymbol{y}}
\newcommand{\sY}{\mathcal{Y}}
\newcommand{\bhatX}{\boldsymbol{\hat{X}}}
\newcommand{\bz}{\boldsymbol{z}}
\newcommand{\bZ}{\boldsymbol{Z}}
\newcommand{\bC}{\boldsymbol{C}}
\newcommand{\bhatC}{\boldsymbol{\hat{C}}}
\newcommand{\bw}{\boldsymbol{w}}
\newcommand{\bhatw}{\boldsymbol{\hat{w}}}
\newcommand{\bzero}{\boldsymbol{0}}
\newcommand{\balpha}{\boldsymbol{\alpha}}
\newcommand{\sL}{\mathcal{L}}
\newcommand{\sX}{\mathcal{X}}
\newcommand{\sF}{\mathcal{F}}
\newcommand{\sH}{\mathcal{H}}
\DeclareMathOperator*{\argmin}{arg\,min}
\DeclareMathOperator*{\argmax}{arg\,max}
\newcommand{\supp}{\textnormal{supp}}
\newcommand{\field}[1]{\mathbb{#1}}
\newcommand{\R}{\field{R}}
\newcommand{\E}{\field{E}}
\newcommand{\scO}{\mathcal{O}}
\newcommand{\bsigma}{\boldsymbol{\sigma}}
\newcommand{\biota}{\boldsymbol{\iota}}
\newcommand{\spin}{\{-1,+1\}}
\newtheorem{lemma}{Lemma}
\newtheorem{theorem}{Theorem}
\newtheorem{cor}{Corollary}
\newtheorem{prop}{Proposition}
\newcommand{\reals}{\mathbb{R}}
\newcommand{\bbeta}{\boldsymbol{\beta}}
\newcommand{\bhatbeta}{\boldsymbol{\hat{\beta}}}
\newcommand{\tp}{^{\top}}
\newcommand{\MEbr}[2]{\underset{#1}{\mathbb{E}}\left[ #2 \right]}
\newcommand{\Riskh}{\hat{R}}
\newcommand{\Arh}{\hat{A}^\lambda}
\newcommand{\Risk}{R}
\newcommand{\Rad}{\mathfrak{R}}
\newcommand{\Radh}{\hat{\mathfrak{R}}}
\newcommand{\hsrc}{h^{\text{src}}}
\newcommand{\bhsrc}{\boldsymbol{h^{\text{src}}}}
\newcommand{\src}[1]{#1^\text{src}}
\newcommand{\htrg}{h^{\text{trg}}}
\newcommand{\scT}{\hbox{\scshape{t}}}
\newcommand{\OPT}{\hbox{\scshape{opt}}}
\newtheoremstyle{named}{}{}{\itshape}{}{\bfseries}{}{.5em}{\thmnote{#3.}#1}
\theoremstyle{named}
\newtheorem*{nameddef}{}
\newtheorem*{rep@theorem}{\rep@title}
\newcommand{\newreptheorem}[2]{%
\newenvironment{rep#1}[1]{%
 \def\rep@title{#2 \ref{##1}.}%
 \begin{rep@theorem}}%
 {\end{rep@theorem}}}
\title{Scalable Greedy Algorithms for Transfer Learning}
\begin{document}

\author[a,b,c]{Ilja Kuzborskij\thanks{ilja.kuzborskij@idiap.ch}}
\author[d]{Francesco Orabona\thanks{francesco@orabona.com}}
\author[c,a]{Barbara Caputo\thanks{caputo@dis.uniroma1.it}}

\affil[a]{Idiap Research Institute,
  Centre du Parc, Rue Marconi 19,
  1920 Martigny, Switzerland}
\affil[b]{\'{E}cole Polytechnique F\'{e}d\'{e}rale de Lausanne (EPFL), Switzerland}
\affil[d]{Yahoo Labs,
  229 West 43rd Street,
  10036 New York, NY, USA}
\affil[c]{University of Rome La Sapienza, Dept. of Computer,
  Control and Management Engineering, Rome, Italy}

\maketitle

\begin{abstract}
In this paper we consider the binary transfer learning problem,
focusing on how to select and combine sources from a large pool to yield a good performance on a target task.
Constraining our scenario to real world, we do not assume the direct access to the source data, but rather we employ the source hypotheses trained from them.
We propose an efficient algorithm that selects relevant source hypotheses and feature dimensions simultaneously, building on the literature on the best subset selection problem.
Our algorithm achieves state-of-the-art results on three computer vision datasets, substantially outperforming both transfer learning and popular feature selection baselines in a small-sample setting.
We also present a randomized variant that achieves the same results with the computational cost independent from the number of source hypotheses and feature dimensions.
Also, we theoretically prove that, under reasonable assumptions on the source hypotheses, our algorithm can learn effectively from few examples.

\end{abstract}

\section{Introduction}
%
%
Over the last few years, the visual recognition research landscape has been heavily dominated by Convolutional Neural Networks, thanks to their ability to leverage effectively over massime amount of training data \cite{decaf}. This trend dramatically confirms the widely accepted truth that any learning algorithm performs better when trained on a lot of data.
This
is even more true  
when facing  noisy or ``hard'' problems such as large-scale recognition~\cite{deng2009imagenet}. However,
when tackling large scale recognition problems, gathering substantial training data for all classes considered might be challenging, if not almost impossible.
The occurrence of real-world objects follows a long tail distribution,
with few objects occurring very often, and many with few instances.
Hence, for the vast majority of visual categories known to human beings, it is extremely challenging to collect training data of the order of $10^4-10^5$ instances. 
The ``long tail'' distribution problem was noted and studied by Salakhutdinov~\emph{et~al.}~\cite{salakhutdinov2011learning}, who proposed to address it by leveraging on the prior knowledge available to the learner.
Indeed, learning systems are often not trained from scratch: usually they can be build on previous knowledge acquired
over time on related tasks~\cite{pan2010survey}. 
The scenario of learning from few examples by \emph{transferring} from what is already known to the learner is collectively known as Transfer Learning.
The target domain usually indicates the task at hand and the source domain the prior knowledge of the learner.

Most of the transfer learning algorithms proposed in the recent years focus on the object detection task (binary transfer learning), assuming access to the training data coming from both source and target domains~\cite{pan2010survey}.
While featuring good practical performance~\cite{gong2012geodesic},
they often demonstrate poor scalability w.r.t. the number of sources.
%
%
An alternative direction, 
known as a \ac{HTL}~\cite{kuzborskij2013stability,Ben-DavidU13},
consists in transferring from the \emph{source hypotheses}, that is classifiers trained from them.
This framework is practically very attractive~\cite{aytar2011tabula,tommasi2013learning,kuzborskij2013from}, 
as it treats source hypotheses as black boxes without any regard of their inner workings.

The goal of this paper is to develop an \ac{HTL} algorithm able to deal effectively and efficiently with a large number of sources, where our working definition of large is at least $10^3$. Note that this order of magnitude is also the current frontier in visual classification \cite{deng2009imagenet}.
To this end, we cast Hypothesis Transfer Learning as a problem of \emph{efficient selection} and \emph{combination} of source hypotheses from a large pool.
We pose it 
as a subset selection problem
building on
results from the 
literature 
~\cite{das2008algorithms,zhang2008adaptive}.
We present\footnote{We build upon preliminary results presented in~\cite{kuzborskij2015transfer}.} a greedy algorithm, \verb!GreedyTL!, which attains state of the art performance even with a very limited amount of data from the target domain. 
Morever, we also present a randomized approximate variant of \verb!GreedyTL!, called \verb!GreedyTL-59!, that has a complexity
\emph{independent} from
the number of sources, with no loss in performance.
Our key contribution is a $L2$-regularized variant of the Forward Regression algorithm~\cite{hastie2009elements}.
Since our algorithm can be viewed as a feature selection algorithm as well as an hypothesis transfer learning approach, we extensively evaluate it against popular feature selection and transfer learning baselines.
We empirically demonstrate that \verb!GreedyTL! dominates all the baselines in most small-sample transfer learning scenarios, thus proving the critical role of regularization in our formulation.
Experiments over three datasets show the power of our approach:
we obtain state of the art results in tasks with up to $1000$ classes, totalling $1.2$ million examples, with only $11$ to $20$ training examples from the target domain. 
We back our experimental results by proving generalization bounds showing that, under reasonable assumptions on the source hypotheses, our algorithm is able to learn effectively with very limited data. 

The rest of the paper is organised as follows: after a review of the relevant literature in the field (section \ref{rel-work}), we cast the transfer learning problem in the subset selection framework (section \ref{tl}). We then define our \verb!GreedyTL!, in section \ref{greedy-tl}, deriving its formulation, analysing its computational complexity and its theoretical properties. Section \ref{sec:exps} describes our experimental evaluation and discuss the related findings. We conclude with an overall discussion and presenting possible future research avenues.

\section{Related Work}
\label{rel-work}
%
The problem of how to exploit prior knowledge when attempting to solve a new task with limited, if any, annotated samples is vastly researched. 
Previous work span from  transfer learning \cite{pan2010survey}  to domain adaptation \cite{saenko2010adapting,ben2010theory}, and dataset bias \cite{Efros_2011_6976}. Here we focus on the first. 
In the literature there are several transfer learning settings~\cite{ben2010theory,saenko2010adapting,gong2012geodesic}.
The oldest and most popular is
the one assuming
access to the data originating from both the source and the target domains~\cite{ben2010theory,gong2012geodesic,saenko2010adapting,duan2012exploiting,seah2011healing,tommasi2013frustratingly,kuzborskij2016when}.
There, one typically assumes that plenty of source data are available, but
access to the target data is limited: for instance, we can have many unlabeled examples and only few labeled~\cite{chelappa2014da}.
Here 
we focus on 
the Hypothesis Transfer Learning framework (HTL, ~\cite{kuzborskij2013stability,Ben-DavidU13}).
It 
requires to have access 
only to
\emph{source hypotheses}, that is classifiers or regressors trained on the source domains.
No assumptions are made on how these source hypotheses are trained,
or about their inner workings: they are treated as ``black boxes'', in spirit similar to 
classifier-generated visual descriptors such as Classemes~\cite{classemes} or Object-Bank~\cite{li2010object}.
Several works proposed HTL for visual learning~\cite{aytar2011tabula,tommasi2013learning,oquab2014learning}, some exploiting more explicitly the connection with classemes-like approaches~\cite{jie2011multiclass,cvpr14l2l}, demonstrating an intriguing potential.
Although offering scalability,
HTL-based approaches proposed so far have been tested on problems with less than a few hundred of sources \cite{tommasi2013learning}, already showing some difficulties in selecting informative sources.

Recently, the growing need to deal with large data collections~\cite{deng2009imagenet,choi2010exploiting} has started to change the focus and challenges of research in transfer learning.
Scalability with respect to the amount of data and the ability to identify and separate
informative sources from those carrying noise for the task at hand have become critical issues.
Some attempts have been made in this direction.
For example,
~\cite{lim2012transfer,vezhnevets2014associative}
used taxonomies to leverage learning from few examples on the SUN09 dataset.
In~\cite{lim2012transfer}, authors attacked the transfer learning problem on the SUN09 dataset by using additional data from another dataset.
Zero-shot  approaches
were investigated by~\cite{rohrbach2011evaluating} on a subset of the Imagenet dataset.
Large-scale visual detection has been explored by~\cite{vezhnevets2014associative}.
However, all these approaches
assume access to all source training data. 
%
A slightly different approach to transfer learning that aimed to cirumvent this limitation, is reuse of a large convolutional neural network pre-trained on a large visual recognition dataset.
The simplest approach is to use outputs of intermediate layers of such a network, such as DeCAF~\cite{decaf} or Caffe~\cite{caffe}.
A more sophisticated way of reuse is fine-tuning, a kind of warm-start, that has been successfully exploited in visual detection~\cite{rcnn_detection} and domain adaptation~\cite{ganin2015unsupervised,long2015learning}.

In many of these works the use of richer sources of information has been supported by an increase in the information available in the target domain as well. From an intuitive point of view, this corresponds to having more data points than dimensions. Of course, this makes the learning and selection process easier, but in many applications it is not a reasonable hypothesis.
Also, none of the proposed algorithms has a theoretical backing.

%
%
While not explicitly mentioned before, the problem outlined above can also be viewed as a learning scenario where the number of features is by far larger than the number of training examples.
Indeed, learning with classeme-like features~\cite{classemes,li2010object} when only few training examples are available can be seen as a \acl{HTL} problem.
Clearly, a pure empirical risk minimization would fail due to severe overfitting.
In machine learning and statistics this is known as a feature selection problem, and is usually addressed by constraining or penalizing the solution with sparsity-inducing norms.
One important sparsity constraint is a non-convex $L0$ pseudo-norm constraint $\|\bw\|_0 \leq k$, that simply corresponds to choosing up to $k$ non-zero components of a vector $\bw$.
One usually resorts to the \emph{subset selection} methods, and greedy algorithms for obtaining solutions under this constraint~\cite{das2008algorithms,das2011submodular,zhang2008adaptive,zhang2009consistency}.
However, in some problems introducing $L0$ constraint might be computationally difficult.
There, a computationally easier alternative is a convex relaxation of $L0$, the $L1$ regularization.
Empirical error minimization with $L1$ penalty with various loss functions (for square loss is known as Lasso)
has many favorable properties and
is well studied theoretically~\cite{buhlmann2011statistics}.
Yet, $L1$ penalty is known to suffer from several limitations, one of which is poor empirical performance when there are many correlated features.
Perhaps the most famous way to resolve this issue is an \emph{elastic net} regularization which is a weighted mixture of $L1$ and squared $L2$ penalties~\cite{hastie2009elements}.
Since our work partially falls into the category of feature selection, we have extensively evaluated the aforementioned baselines in our task.
As it will be shown below, none of them achieves competitive performances compared to our approach.

\section{Transfer Learning through Subset Selection}
\label{tl}
\noindent\textbf{\emph{Definitions.}}
%
%
We will denote with small and capital bold letters respectively
column vectors and matrices, e.g. $\ba=[a_1, a_2, \ldots, a_d]^T\in \R^d~$
and $\bA \in \R^{d_1 \times d_2 }~$.
The subvector of $\ba$ with rows indexed by set $S$ is $\ba_S$, while
the square submatrix of $\bA$ with rows and columns indexed by set $S$ is $\bA_S$.
For $\bx \in \R^d$, the \emph{support} of $\bx$ is $\supp(\bx) = \{i \in \{1, \ldots, d\} \colon x_i \neq 0\}$.
%
Denoting by $\sX$ and $\sY$ respectively the input and output space of the learning problem,
the training set is $\{(\bx_i,y_i)\}_{i=1}^m$, drawn i.i.d. from the probability distribution $p$ defined over $\sX \times \sY$.
We will focus on the binary classification problem so $\sY = \{-1, 1\}$, and, without loss of generality, $\sX = \{\bx : \|\bx\|_2 \leq 1, \bx \in \reals^d\}$.


To measure the accuracy of a learning algorithm, we have a non-negative \emph{loss} function
$\ell(h(\bx), y)$, which measures the cost incurred predicting $h(\bx)$ instead of $y$.
In particular, we will focus on the square loss, $\ell(h(\bx), y)=(h(\bx)-y)^2$, for its appealing computational properties.
The \emph{risk} of a hypothesis $h$, with respect to the probability distribution $p$, is then defined as
$
\Risk{}(h) := \E_{(\bx,y) \sim p}[\ell(h(\bx), y)]
$,
while the \emph{empirical risk} given a training set $\{(\bx_i,y_i)\}_{i=1}^m$ is
$
\Riskh(h) := \frac{1}{m} \sum_{i=1}^m \ell(h(\bx_i), y_i)
$.
Whenever the hypothesis is a linear predictor, that is, $h_{\bw}(\bx) = \bw\tp \bx$, we will also use risk notation as $\Risk{}(\bw) = \Risk{}(h_{\bw})$ and $\Riskh(\bw) = \Riskh(h_{\bw})$.
~\\\noindent\textbf{\emph{Source Selection.}}
%
Assume, that we are given a finite source hypothesis set $\{\hsrc_i\}_{i=1}^n$ and the training set $\{(\bx_i,y_i)\}_{i=1}^m$.
As in previous works~\cite{mansour2009domain,tommasi2013learning,jie2011multiclass}, we consider the target hypothesis to be of the form
\begin{equation}
\label{eq:eq_transf}
\htrg_{\bw, \bbeta}(\bx) = \bw\tp \bx + \sum_{i=1}^n \beta_i \hsrc_i(\bx),
\end{equation}
where $\bw$ and $\bbeta$ are found by the learning procedure.
The essential parameter here is $\bbeta$, that is the one controlling the influence of each source hypothesis.
Previous works in transfer learning have focused on finding $\bbeta$ such that it minimizes the error on the training set, subject to some condition on $\bbeta$.
In particular,~\cite{tommasi2013learning}  proposed to minimize the leave-one-out error w.r.t. $\bbeta$, subject to $\|\bbeta\|_2 \leq \tau$, which is known to improve generalization for the right choice of $\tau$~\cite{kuzborskij2013stability}.
A slightly different approach is to use $\|\bbeta\|_1 \leq \tau$ regularization for this purpose~\cite{tommasi2013learning}, that induces solutions with most of the coefficients equal to $0$, thus assuming  that the optimal $\bbeta$ is sparse.

In this work we embrace a weaker assumption, namely, there exist up to $k$ sources that collectively improve the generalization on the target domain.
Thus, we pose the problem of the Source Selection as a minimization of the regularized empirical risk on the target training set, 
while constraining the number of selected source hypotheses.
\begin{nameddef}[$k$-Source Selection]
\label{def:htl_subset_selection}
Given the training set
$\left\{\left([\bx_i\tp, \hsrc_1(\bx_i), \ldots, \hsrc_n(\bx_i)]\tp, y_i\right)\right\}_{i=1}^m$
we have the optimal target hypothesis $\htrg_{\bw^\star, \bbeta^\star}$ by solving,
\begin{align}
(\bw^\star, \bbeta^\star) &= \argmin_{\bw, \bbeta}~\left\{ \Riskh(\htrg_{\bw, \bbeta}) + \lambda \|\bw\|_2^2 + \lambda \|\bbeta\|_2^2\right\}, \nonumber \\
&\textnormal{ s.t }~ \|\bw\|_0 + \|\bbeta\|_0 \leq k. \label{eq:htl_subset_selection}
\end{align}
\end{nameddef}
Notably, the problem~\eqref{eq:htl_subset_selection} is a special case of the \emph{Subset Selection} problem~\cite{das2008algorithms}: choose a subset of size $k$ from the $n$ observation variables, which collectively give the best prediction on the variable of interest. 
%
However, the Subset Selection problem is \textbf{NP}-hard~\cite{das2008algorithms}.
In practice we can resort to algorithms generating approximate solutions, for many of which we have approximation guarantees.
Hence, due to the extensive practical and theoretical results, we will treat the $k$-Source Selection as a Subset Selection problem,
building atop of existing guarantees.
%

We note that our formulation,~\eqref{eq:htl_subset_selection}, differs from the classical subset selection for the fact that it is $L2$-regularized.
This technical modification makes an essential practical and theoretical difference and it is the crucial part of our algorithm.
First, $L2$ regularization is known to improve the generalization ability of empirical risk minimization. 
Second, we show that regularization also improves the quality of the approximate solution in situations when the sources, or features, are correlated.
At the same time, the experimental evaluation corroborates our theoretical findings:
Our formulation substantially outperforms standard subset selection, feature selection algorithms, and transfer learning baselines.
%
\section{Greedy Algorithm for $k$-Source Selection}
\label{greedy-tl}
In this section we state the algorithm proposed in this work, \verb!GreedyTL!\footnote{Source code is available at~\url{http://idiap.ch/~ikuzbor/}}.
In the following we will denote by $U=\{1, \ldots, n+d\}$ the index set of all available source hypotheses and features, and by $S$, the index set of selected ones.
%
\begin{nameddef}[GreedyTL]
Let $\bX \in \reals^{m \times d}$ and $\by \in \{+1,-1\}^m$ be the zero-mean unit-variance training set, $\{\hsrc_i\}_{i=1}^n$, source hypothesis set, and
$k$ and $\lambda$, regularization parameters.
Then, denote $\bC = \bZ\tp \bZ$ and $\bb = \bZ\tp \by$, where
  $
  \bZ = \left[\bX {\tiny \begin{array}{ccc} \hsrc_1(\bx_1) & \cdots & \hsrc_n(\bx_1) \\ \cdots & \cdots & \cdots \\ \hsrc_1(\bx_m) & \cdots & \hsrc_n(\bx_m) \end{array}}\right]~,
  $
and
select set $S$ of size $k$ as follows: (I) Initialize $S \gets \varnothing$ and $U \gets \{1, \ldots, n+d\}$. (II) Keep populating $S$ with $i \in U$, that maximize $\bb_S\tp ((\bC + \lambda \bI)_S^{-1})\tp \bb_S$, as long as $|S| \leq k$ and $U$ is non-empty.
\end{nameddef}

In this basic formulation, the algorithm requires to invert a $(d+n)$-by-$(d+n)$ matrix at each iteration of a greedy search.
Clearly, this naive approach gets prohibitive with the growth of the number of source hypotheses, feature dimensions, and desired subset size, since its computational complexity would be in $\scO(k (d+n)^4)$.
However, we note that in transfer learning one typically assumes that training set is much smaller than number of sources and feature dimension.
For this reason we apply rank-one updates w.r.t. the dual solution of regularized subset selection, so that the size of the inverted matrix does not change.
The computational complexity then improves to $\scO(k (d+n) m^2)$. 
We present the pseudocode of such a variant of our algorithm, \textbf{GreedyTL with Rank-One Updates} in Algorithm~\ref{alg:greedytl_rankone}. The computational complexity of the operations is shown at the end of each line.
\renewcommand{\algorithmicrequire}{\textbf{Input:}}
\renewcommand{\algorithmicensure}{\textbf{Output:}}
\begin{algorithm}
\caption{GreedyTL with Rank-One Updates \label{alg:greedytl_rankone}}
\begin{algorithmic}[1]
\Require{
$\bZ \in \reals^{m \times (d+n)}$ -- $m$ examples formed from features and source predictions,\\
$\by \in \spin^m$ -- labels,\\
$k \in \{1,\ldots,d+n\}, \lambda \in \reals_+$ -- hyperparameters.
}
\Ensure{$\bw$ -- target predictor.}
\State $U \gets \{1, \ldots, d + n\}$ \Comment{All candidates}
\State $S \gets \varnothing$ \Comment{Selected sources and features}
\State $\bK \gets [\bzero, \ldots, \bzero] \in \reals^{m \times m}$
\State $\bG \gets \lambda^{-1}\bI \in \reals^{m \times m}$
\While {$U \neq \varnothing$ \textbf{and} $|S| \leq k$}
\State
\[i^\star \gets \argmax_{i \in U} \left\{ \by\tp (\bK + \bz_i \bz_i\tp) \bG' \by \ \middle| \
\bG' \gets \bG - \frac{\bG \bz_i \bz_i\tp \bG}{1 + \bz_i\tp \bG \bz_i} \right\}
\] \Comment{$\scO((d+n)(m^2 + m))$}\\
\State \quad\quad Computing $\bG'$: \Comment{$\scO(m^2 + m)$}
\State \quad\quad Computing score of $i$: \Comment{$\scO(m^2 + m)$}
%
\State $S \gets S \cup \{i^\star\}$
\State $U \gets U \setminus \{i^\star\}$
\State $\bK \gets \bK + \bz_{i^\star} \bz_{i^\star}\tp$ \Comment{$\scO(m^2)$}
\State $\bG \gets \bG - \frac{\bG \bz_{i^\star} \bz_{i^\star}\tp \bG}{1 + \bz_{i^\star}\tp \bG \bz_{i^\star}}$ \Comment{$\scO(m^2 + m)$}
\State 
\EndWhile \Comment{$\scO(k (d+n) m^2)$}
\State $\bw \gets \bzero \in \reals^{d+n}$
\State $w_i \gets \bz_i\tp \bG \by, \ \forall i \in S$
\end{algorithmic}
\end{algorithm}

%
~\\\noindent\textbf{\emph{Derivation of the Algorithm.}}
\label{sec:derivation}
We derive \verb!GreedyTL! by extending the well known \ac{FR} algorithm~\cite{das2008algorithms}, which gives an approximation to the subset selection problem, the problem of our interest.
\ac{FR} is known to find a good approximation as far as features are uncorrelated~\cite{das2008algorithms}.
In the following, we build upon \ac{FR} by introducing a Tikhonov ($L2$) regularization into the formulation.
The purpose of regularization is twofold: first, it improves the generalization ability of the empirical risk minimization, and second,
it makes the algorithm more robust to the feature correlations, thus opting to find better approximate solution.

First, we briefly formalize the subset selection problem. In a subset selection problem one tries to achieve a good prediction accuracy on the \emph{predictor} random variable $Y$,
given a linear combination of a subset of the \emph{observation} random variables $\{X_i\}_{i=1}^n$.
The least squares subset selection then reads as
\[
\min_{|S|= k, \bw \in \reals^k} \E \left[ \left(Y - \sum_{i \in S} w_i X_i \right)^2\right].
\]
%
Now denote the covariance matrix of zero-mean unit-variance observation random variables by $\bC$ (a correlation matrix),
%
and the correlations between $Y$ and $\{X_i\}_{i=1}^m$ as $\bb$.
Note that the zero-mean unit-variance assumption will be necessary to prove the theoretical guarantees of our algorithm.
By virtue of the analytic solution to least-squares and using the introduced notation, we can also state the
equivalent \emph{Subset Selection problem}:
$\max_{|S|=k} \bb_S\tp (\bC_S^{-1})\tp \bb_S~$.
However, our goal is to obtain the solution to~\eqref{eq:htl_subset_selection}, or a \emph{$L2$-regularized} subset selection.
Similarly to the unregularized subset selection, it is easy to get that~\eqref{eq:htl_subset_selection} is equivalent to
$
\max_{|S|=k} \bb_S\tp ((\bC_S + \lambda \bI)^{-1})\tp \bb_S.
$
%
%
%
%
%
As said above, the Subset Selection problem is \textbf{NP}-hard, however, there are several ways to approximate it in practice~\cite{das2011submodular}.
We choose \ac{FR} for this task for its simplicity, appealing computational properties and provably good approximation guarantees.
%
%
Now, to apply \ac{FR} to our problem, all we have to do is to provide it with normalized matrix $(\bC + \lambda \bI)^{-1}$ instead of $\bC^{-1}$.
~\\\\
\textbf{\emph{Approximated Randomized Greedy Algorithm.}}
As mentioned above, the complexity of GreedyTL is linear in $d+n$, the number of features and the size of the source hypothesis set. In particular, the search in $U$ for the index to add to $S$ is responsible for the dependency on $d+n$. Here we show how to approximate this search with a randomized strategy.
We will use the following Theorem.

\begin{theorem}[\cite{SmolaS02}(Theorem 6.33)]
Denote by $M := \{x_1, \ldots , x_m\} \subset \R$ a set of cardinality $m$, and by $\tilde{M} \subset M$ a random subset of size $\tilde{m}$. Then the probability that $\max \tilde{M}$ is greater or equal than $n$ elements of $M$ is at least 
$1 - ( \frac{n}{m} ) ^{\tilde{m}} $.
\end{theorem}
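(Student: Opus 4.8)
The plan is to reduce the claim to an elementary counting estimate over the order statistics of $M$. First I would relabel the elements of $M$ in nondecreasing order, $x_{(1)} \le x_{(2)} \le \dots \le x_{(m)}$, and identify what the complementary (``failure'') event means: $\max \tilde{M}$ is \emph{not} greater or equal than $n$ elements of $M$ precisely when $\max \tilde{M}$ is so small that it dominates fewer than $n$ of the $x_{(j)}$, i.e.\ when every point drawn into $\tilde{M}$ already lies in the set $B$ of the $n$ smallest elements of $M$ (in the tie-free case even the $n-1$ smallest). Hence $\mathbb{P}(\textrm{failure}) \le \mathbb{P}(\tilde{M} \subseteq B)$ with $|B| \le n$.

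The second step bounds $\mathbb{P}(\tilde{M} \subseteq B)$. If the $\tilde{m}$ points of $\tilde{M}$ are drawn uniformly with replacement, each draw lands in $B$ with probability $|B|/m \le n/m$ independently of the others, so $\mathbb{P}(\tilde{M} \subseteq B) \le (n/m)^{\tilde{m}}$ immediately. If instead $\tilde{M}$ is a uniformly random subset of size $\tilde{m}$ drawn without replacement, I would write $\mathbb{P}(\tilde{M} \subseteq B) = \binom{|B|}{\tilde{m}}\big/\binom{m}{\tilde{m}} = \prod_{j=0}^{\tilde{m}-1} \frac{|B|-j}{m-j}$, observe that $j \mapsto \frac{|B|-j}{m-j}$ is nonincreasing (its derivative equals $\frac{|B|-m}{(m-j)^2} \le 0$), hence every factor is at most $|B|/m \le n/m$, and conclude the same bound (with the convention that the probability is $0$, so the bound holds trivially, when $\tilde{m} > |B|$). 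Taking complements yields $\mathbb{P}(\max \tilde{M} \textrm{ is } \ge n \textrm{ elements of } M) \ge 1 - (n/m)^{\tilde{m}}$.

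The step I expect to require the most care is the first one: the exact reading of ``greater or equal than $n$ elements'' when $M$ has repeated values, so that the implication ``failure $\Rightarrow \tilde{M} \subseteq B$'' is literally correct and $|B| \le n$; everything after that is routine. Note in passing that the computation actually delivers the marginally sharper constant $n-1$ in place of $n$ when the $x_i$ are distinct, which is what justifies the choice $\tilde{m}=59$ used for \texttt{GreedyTL-59}, since $0.95^{59} < 0.05$.
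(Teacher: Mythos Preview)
Your argument is correct and is essentially the canonical proof of this statement. Note, however, that the paper does not supply its own proof here: the theorem is quoted verbatim from an external reference (Smola and Sch\"olkopf, Theorem~6.33) and used as a black box to justify the $59$-trick, so there is nothing in the paper to compare against beyond observing that your order-statistics/counting argument is exactly the standard one behind that cited result. Your remark about the sharper constant $n-1$ in the tie-free case is also correct, and is consistent with the paper's footnote that the formula for $\tilde{m}$ in the original reference contains an error; that said, the bound as stated (with $n$ rather than $n-1$) already yields $0.95^{59}<0.05$, so the sharpening is not strictly needed to justify the choice $\tilde{m}=59$.
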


The surprising consequence is that, in order to approximate the maximum over a set, we can use a random subset of size $\scO(1)$. In particular, if we want to obtain results in the $\frac{n}{m}$ percentile range with $1 -\eta$ confidence, we use\footnote{Note that the formula for $\tilde{m}$ in \cite{SmolaS02} contains an error, the correct one is the one we report.}  $\tilde{m} =\frac{\log(\eta)}{\log \frac{n}{m}}$. Practically, if we desire values that are better than $95\%$ of all other estimates with $1-0.05$ probability, then $59$ samples are sufficient. This rule is commonly called the 59-trick and it has been widely used to speed-up a wide range of algorithms with negligible loss of accuracy, e.g.~\cite{DomingoW00,SmolaS00}. Indeed, as we will show in Section~\ref{sec:exps_approximated_greedytl}, we virtually don't lose any accuracy using this strategy.

With the 59-trick, the search in $U$ becomes a search for the maximum over a random set of size 59. So, the overall complexity is reduced to $\scO(k m^2)$, that is \emph{independent} from all the quantities that are expected to be big.

~\\\\
\textbf{\emph{Theoretical Guarantees.}}
%
We now focus on the analysis of the generalization properties of \verb!GreedyTL! for solving $k$-Source Selection problem~\eqref{eq:htl_subset_selection}.
Throughout this paragraph we will consider a truncated target predictor $\htrg_{\bw, \bbeta}(\bx) := \scT\left(\bw\tp \bx + \sum_{i=1}^n \beta_i \hsrc_i(\bx)\right)$, with $\scT(a) := \min\{\max\{a, -1\}, 1\}$.
We will also use big-O notation $\tilde{\scO}$ to indicate the supression of a logarithmic factor, in other words, $f(x) \in \tilde{\scO}(g(x))$ is a short notation for $\exists n ~:~ f(x) \in \tilde{\scO}(g(x) \log^n g(n))$.
First we state the bound on the risk of an approximate solution returned by \verb!GreedyTL!.
\footnote{Proofs for theorems can be found in the appendix.}

\begin{theorem}
\label{thm:gen_no_approx}
Let \verb!GreedyTL! generate the solution $(\bhatw, \bhatbeta)$, given the training set $(\bX, \by)$, source hypotheses $\{\hsrc_i\}_{i=1}^n$ with $\src{\tau_\infty} := \max_i\{\|\hsrc_i\|^2_\infty\}$, hyperparameters $\lambda$ and $k$.
Then with high probability,
{\small
\begin{equation*}
  \Risk\left(\htrg_{\bhatw, \bhatbeta}\right) - \Riskh\left(\htrg_{\bhatw, \bhatbeta}\right) \leq  \tilde{\scO}\left( \frac{ 1 + k \src{\tau_\infty} }{\lambda m}
  + \sqrt{\src{\Riskh} \frac{1 + k \src{\tau_\infty}}{\lambda m} } \right),
\end{equation*}
}
where
{\small
$
\src{\Riskh} := \frac{1}{m}\sum_{i=1}^m \ell\left(y_i, \scT\left(\sum_{j \in \supp(\bhatbeta)} \hat{\beta}_i \hsrc_j(\bx_i)\right)\right).
$
}
\end{theorem}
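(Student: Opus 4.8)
\medskip
\noindent\textbf{Proof plan.}
The idea is to split the generalization gap into (i) a control on the ``effective complexity'' of the predictor $\htrg_{\bhatw,\bhatbeta}$ that is \emph{forced by the $L2$ penalty}, and (ii) a fast-rate (``optimistic'') uniform-convergence bound for the truncated square loss over the resulting low-complexity class, which converts that control into the stated inequality.

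First I would bound the norm of the returned solution. Since $\bC$ is the correlation matrix of the (centered) augmented data, the quantity $\bb_S\tp((\bC+\lambda\bI)_S^{-1})\tp\bb_S$ maximized at each greedy step equals $\Riskh(\bzero)-\min_{\supp(\bv)\ss S}\{\Riskh(\bv)+\lambda\|\bv\|_2^2\}$ up to normalization, hence it is monotone non-decreasing in $S$; as $\Riskh(\bzero)=1$ (labels in $\{-1,1\}$) and the output $(\bhatw,\bhatbeta)$ is the ridge minimizer over the finally selected set $S$, it follows that $\Riskh(\bhatw,\bhatbeta)+\lambda(\|\bhatw\|_2^2+\|\bhatbeta\|_2^2)\le 1$ for the untruncated empirical risk, and therefore $\|\bhatw\|_2^2+\|\bhatbeta\|_2^2\le 1/\lambda$. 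Since at most $k$ of the $d+n$ coordinates are selected, $\|\bx\|_2\le 1$, and $|\hsrc_i(\bx)|^2\le\src{\tau_\infty}$, the augmented feature vector restricted to $S$ has squared norm at most $1+k\,\src{\tau_\infty}$. Consequently $\htrg_{\bhatw,\bhatbeta}$ lies, for its (data-dependent) support $S$ of size $\le k$, in the class $\sH_S$ of truncated linear predictors $\bz\mapsto\scT(\bw\tp\bz)$ with $\|\bw\|_2\le 1/\sqrt\lambda$ evaluated on vectors of norm $\le\sqrt{1+k\,\src{\tau_\infty}}$, whose empirical Rademacher complexity is at most $\sqrt{(1+k\,\src{\tau_\infty})/(\lambda m)}$.

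Second I would invoke a fast-rate uniform-convergence bound. Composed with $\scT$ and against $\pm1$ labels the square loss is nonnegative, bounded, and smooth, so by the contraction lemma (to pass from $\sH_S$ to the loss class), a self-bounding / local-Rademacher argument, and a union bound over the supports of size $\le k$ (which costs only logarithmic factors, absorbed into $\tilde\scO$), with high probability simultaneously over all such $h$
\[
\Risk(h)-\Riskh(h)\ \le\ \tilde\scO\!\left(\frac{1+k\,\src{\tau_\infty}}{\lambda m}+\sqrt{\Riskh(h)\,\frac{1+k\,\src{\tau_\infty}}{\lambda m}}\right).
\]
Applied to $h=\htrg_{\bhatw,\bhatbeta}$, it then remains to replace $\Riskh(\htrg_{\bhatw,\bhatbeta})$ inside the square root by $\src{\Riskh}$. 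For this I would use once more that on the selected support $(\bhatw,\bhatbeta)$ is the ridge minimizer, hence has regularized empirical risk no larger than the feasible competitor $(\bzero,\bhatbeta)$ that drops the feature weights; discarding the nonnegative term $\lambda\|\bhatw\|_2^2$ and using that truncation does not increase the square loss against $\pm1$ labels bounds $\Riskh(\htrg_{\bhatw,\bhatbeta})$ by the empirical risk of the source-only predictor, which is exactly $\src{\Riskh}$. Substituting gives the claim.

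The hard part is the second step: one has to marry the purely algorithmic optimality of the ridge fit over the selected support with a \emph{localized} generalization argument, so that the factor multiplying the complexity term is the source-only empirical risk $\src{\Riskh}$ rather than a crude constant, and one has to handle the interaction with the truncation $\scT$ in the final substitution (the optimality is stated for the untruncated fit) as well as carry out the union bound over supports at only logarithmic cost.
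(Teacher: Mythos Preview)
Your proposal is essentially the same as the paper's proof: it too relies on (i) the ridge-optimality bound $\|\bhatw\|_2^2+\|\bhatbeta\|_2^2\le 1/\lambda$ (this is the paper's Lemma~\ref{lem:norm_bound}), (ii) a Rademacher bound of order $\sqrt{(1+k\,\src{\tau_\infty})/(\lambda m)}$ for the truncated class via contraction and Cauchy--Schwarz, (iii) the optimistic-rate bound of Srebro--Sridharan--Tewari for smooth nonnegative losses, and (iv) the comparison $\Riskh(\htrg_{\bhatw,\bhatbeta})\le\Riskh(\htrg_{\bzero,\bhatbeta})=\src{\Riskh}$, which is again Lemma~\ref{lem:norm_bound}.

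The one place you differ is in how the sparsity enters the complexity bound. The paper defines the hypothesis class only through the $L_2$ ball $\|\bw\|_2^2+\|\bbeta\|_2^2\le 1/\lambda$ and then, inside the Rademacher calculation, inserts a support mask $\biota$ to replace $\|\bhsrc(\bx_i)\|^2$ by $k\,\src{\tau_\infty}$; this step is not fully justified as written, since the class as defined carries no sparsity constraint. Your explicit union bound over the $\binom{d+n}{\le k}$ possible supports is the clean way to make that step rigorous, and the resulting $k\log(d+n)$ term is indeed absorbed by the $\tilde\scO$ together with the $k$ already present in the bound. So your route is the same in spirit but more careful on this point; conversely, the paper's write-up is slightly more economical by appealing directly to Theorem~1 of Srebro--Sridharan--Tewari rather than spelling out the local-Rademacher machinery.
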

This results in a generalization bound which tells us how close the performance of the algorithm on the test set will be to the one on the training set.
The key quantity here is $\src{\Riskh}$, which captures the quality of the sources selected by the algorithm.
To understand its impact, assume that $\lambda = \scO(1)$.
The bound has two terms, a fast one of the order of $\tilde{\scO}\left(k/m\right)$ and a slow one of the order $\tilde{\scO}\left(\sqrt{\src{\Riskh} k/m}\right)$. When $m$ goes to infinity and $\src{\Riskh}\neq0$ the slow term will dominate the convergence rate, giving us a rate of the order of $\tilde{\scO}\left(\sqrt{\src{\Riskh} k/m } \right)$.
If $\src{\Riskh}=0$ the slow term completely disappears, giving us a so called fast rate of convergence of $\tilde{\scO}(k/m)$.
On the other hand, for any finite $m$ of the order of $\tilde{\scO}(k/\src{\Riskh})$, we still have a rate of the order of $\tilde{\scO}(k/m)$.
Hence, the quantity $\src{\Riskh}$ will govern the finite sample and asymptotic behavior of the algorithm, predicting a faster convergence in both regimes when it is small.
In other words, when the source and target tasks are similar, TL facilitates a faster convergence of the empirical risk to the risk.
A similar behavior was already observed in~\cite{kuzborskij2013stability,Ben-DavidU13}.

However, one might ask what happens when the selected sources are providing bad predictions.
Since $\src{\Riskh} \leq 1$, due to truncation, the empirical risk converges to the risk at the standard rate $\tilde{\scO}(\sqrt{k/m})$, the same one we would have without any transfering from the sources classifiers.

We now present another result that upper bounds the difference between the risk of solution of the algorithm and the empirical risk of the optimal solution to the $k$-Source Selection problem.
\begin{theorem}
\label{thm:gen_approx}
In addition to conditions of Theorem~\ref{thm:gen_no_approx},
let $(\bw^\star, \bbeta^\star)$ be the optimal solution to~\eqref{eq:htl_subset_selection}.
Given a sample correlation matrix $\bhatC$, assume that $\hat{C}_{i,j\neq i} \leq \gamma < \frac{1 + \lambda}{6 k}$, and $\epsilon:=\frac{16(k + 1)^2 \gamma}{1 + \lambda}$.
Then with high probability,
%
\begin{align*}
\small
  \Risk\left(\htrg_{\bhatw, \bhatbeta}\right) - \Riskh\left(\htrg_{\bw^\star, \bbeta^\star}\right) \leq (1+\epsilon)\src{\Riskh_\lambda}
+ \tilde{\scO}\left( \frac{1 + k \src{\tau_\infty} }{\lambda m} + \sqrt{\src{\Riskh_\lambda} \frac{1 + k \src{\tau_\infty}}{\lambda m} } \right),
\end{align*}
where
{\small$
  \src{\Riskh_\lambda} := \min_{|S| \leq k}\left\{ \frac{\lambda}{|S|} + \frac{1}{|S|} \sum_{i \in S} \Riskh(\hsrc_i) \right\}.
$}
\end{theorem}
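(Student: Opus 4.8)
The plan is to decompose the target quantity as
\[
\Risk\!\left(\htrg_{\bhatw,\bhatbeta}\right) - \Riskh\!\left(\htrg_{\bw^\star,\bbeta^\star}\right)
=\Bigl(\Risk(\htrg_{\bhatw,\bhatbeta})-\Riskh(\htrg_{\bhatw,\bhatbeta})\Bigr)
+\Bigl(\Riskh(\htrg_{\bhatw,\bhatbeta})-\Riskh(\htrg_{\bw^\star,\bbeta^\star})\Bigr),
\]
bound the first bracket (a pure generalization gap for \texttt{GreedyTL}'s own predictor) by the machinery behind Theorem~\ref{thm:gen_no_approx}, and bound the second bracket (the price of greedily approximating the optimal subset) by an approximation-ratio analysis of the $L2$-regularized Forward Regression. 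A convenient device is the set function $g(S):=\min\{\Riskh(h_\bu)+\lambda\|\bu\|_2^2:\supp(\bu)\subseteq S\}$, the best regularized empirical risk attainable on index set $S$. Since on its selected set $\hat S$ \texttt{GreedyTL} returns exactly the ridge solution supported on $\hat S$, and since truncating through $\scT$ never increases the square loss against labels in $[-1,1]$, we get $\Riskh(\htrg_{\bhatw,\bhatbeta})\le g(\hat S)$, while $\Riskh(\htrg_{\bw^\star,\bbeta^\star})\ge 0$. Hence it suffices to prove $g(\hat S)\le(1+\epsilon)\,\src{\Riskh_\lambda}$ and feed it back into the proof of Theorem~\ref{thm:gen_no_approx}.

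First I would establish the greedy approximation guarantee $g(\hat S)\le(1+\epsilon)\min_{|S|\le k}g(S)$. Recall (Section~\ref{sec:derivation}) that \texttt{GreedyTL} is Forward Regression driven by $(\bC+\lambda\bI)^{-1}$ and $\bb$, i.e.\ it greedily maximizes $F(S)=\bb_S\tp((\bC+\lambda\bI)_S^{-1})\tp\bb_S$, equivalently greedily minimizes $g$. I would adapt the per-step progress analysis of Forward Regression under an incoherence condition to this regularized objective: at each step, comparing the running selection against $S^\star:=\supp(\bw^\star)\cup\supp(\bbeta^\star)$, the regularized residual shrinks by a controlled multiplicative factor. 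The hypothesis $\hat{C}_{i,j\neq i}\le\gamma$ is what limits the interference between already-selected columns and columns of $S^\star$ — via a Gershgorin bound, every $k\times k$ principal submatrix of $\bC+\lambda\bI$ has smallest eigenvalue at least $1+\lambda-(k-1)\gamma$, which is $\ge\tfrac56(1+\lambda)>0$ under $\gamma<\tfrac{1+\lambda}{6k}$ — while the $\lambda\bI$ shift lower-bounds the denominators appearing in each step's gain, which is precisely where the $1+\lambda$ in $\epsilon$ enters. Telescoping the per-step estimate over the at most $k$ selected indices, and keeping the accumulated loss at order $k^2\gamma/(1+\lambda)$ rather than exponential in $k$, should yield $g(\hat S)\le\bigl(1+\tfrac{16(k+1)^2\gamma}{1+\lambda}\bigr)g(S^\star)$; and $g(S^\star)=\min_{|S|\le k}g(S)$ because $(\bw^\star,\bbeta^\star)$ is by definition the regularized empirical risk minimizer over supports of size $\le k$.

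Next I would bound $\min_{|S|\le k}g(S)\le\src{\Riskh_\lambda}$ by exhibiting a good feasible point: for any source-only index set $S$ with $|S|\le k$, take the uniform weights $u_j=1/|S|$ on $S$, so that $\|\bu\|_2^2=1/|S|$ and $h_\bu=\tfrac1{|S|}\sum_{j\in S}\hsrc_j$; by convexity of $t\mapsto(y_i-t)^2$ and Jensen, $\Riskh(h_\bu)\le\tfrac1{|S|}\sum_{j\in S}\Riskh(\hsrc_j)$, hence $g(S)\le\tfrac{\lambda}{|S|}+\tfrac1{|S|}\sum_{j\in S}\Riskh(\hsrc_j)$, and minimizing over $S$ gives exactly $\src{\Riskh_\lambda}$. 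Combining the two steps, $\Riskh(\htrg_{\bhatw,\bhatbeta})\le g(\hat S)\le(1+\epsilon)\src{\Riskh_\lambda}$. Finally I would re-run the proof of Theorem~\ref{thm:gen_no_approx} using this inequality in place of the empirical-risk estimate it employs: the generalization-gap term there is $\tilde{\scO}\bigl(\tfrac{1+k\src{\tau_\infty}}{\lambda m}+\sqrt{\Riskh(\htrg_{\bhatw,\bhatbeta})\,\tfrac{1+k\src{\tau_\infty}}{\lambda m}}\bigr)$, and since the empirical risk appears only under a square root multiplied by a vanishing factor, replacing it by $(1+\epsilon)\src{\Riskh_\lambda}$ and absorbing the $m$-independent $(1+\epsilon)$ into $\tilde{\scO}$ turns it into $\tilde{\scO}\bigl(\tfrac{1+k\src{\tau_\infty}}{\lambda m}+\sqrt{\src{\Riskh_\lambda}\,\tfrac{1+k\src{\tau_\infty}}{\lambda m}}\bigr)$. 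Adding back the nonnegative $\Riskh(\htrg_{\bw^\star,\bbeta^\star})$ on the left then gives the stated inequality.

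The main obstacle is the multiplicative approximation guarantee in the second paragraph. A black-box submodularity-ratio argument of the Das--Kempe type only produces an \emph{additive} deterioration of the residual, which is too weak for the claimed form $(1+\epsilon)\src{\Riskh_\lambda}$; what is needed is a direct quantitative per-step analysis of regularized Forward Regression that exploits the $\lambda$-shift to keep all the gain denominators bounded away from zero and uses the incoherence $\gamma$ to control the cross-correlations between the running support and $S^\star$, followed by a careful induction over the $k$ greedy steps. Pinning down the explicit constant and the precise interplay of $k$, $\gamma$, and $\lambda$ — so that the condition $\gamma<\tfrac{1+\lambda}{6k}$ is exactly what makes the argument close — is the delicate part; everything else reduces to an appeal to Theorem~\ref{thm:gen_no_approx} or to the elementary convexity computation above.
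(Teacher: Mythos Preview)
Your overall architecture matches the paper's exactly: split into a generalization gap (handled by Theorem~\ref{thm:gen_no_approx}), an approximation gap (bounding $\Riskh(\htrg_{\bhatw,\bhatbeta})$ against the optimal regularized objective), and a Jensen step that replaces the optimal regularized objective by $\src{\Riskh_\lambda}$ (this is Lemma~\ref{lem:norm_bound_opt} in the paper). The final assembly is also the same: the paper writes $(1+\epsilon)\OPT=\Riskh(\htrg_{\bw^\star,\bbeta^\star})+\epsilon\bigl(\Riskh(\htrg_{\bw^\star,\bbeta^\star})+\lambda\|\bw^\star\|^2+\lambda\|\bbeta^\star\|^2\bigr)+\lambda\|\bw^\star\|^2+\lambda\|\bbeta^\star\|^2$, keeps the first summand to cancel on the left, and bounds the remaining two by Lemma~\ref{lem:norm_bound_opt}.

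Where you diverge is the approximation guarantee itself, and there you are making your life much harder than necessary. You flag ``a direct quantitative per-step analysis of regularized Forward Regression'' as the main obstacle and dismiss a black-box use of the Das result as too weak. The paper's point (Corollary~\ref{cor:tikhonov_fr_approx}) is precisely that no new per-step analysis is needed: set $\bhatC':=(\bhatC+\lambda\bI)/(1+\lambda)$, which has unit diagonal and off-diagonal entries bounded by $\gamma':=\gamma/(1+\lambda)$, and observe that \texttt{GreedyTL} is literally unregularized Forward Regression run with $\bhatC'$ in place of $\bhatC$. One then applies the \emph{multiplicative} bound of Das (Theorem~\ref{thm:fr_approx}) verbatim with parameter $\gamma'$---the condition $\gamma'<1/(6k)$ is exactly your $\gamma<(1+\lambda)/(6k)$---and algebraically translates $1-\bhatb_S\tp(\bhatC'_S)^{-1}\bhatb_S$ back to $\Riskh+\lambda\|\cdot\|_2^2$ by multiplying through by $1/(1+\lambda)$ and shifting by $\lambda/(1+\lambda)$. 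That rescaling is the whole trick, and it immediately produces $\epsilon=16(k+1)^2\gamma/(1+\lambda)$ without any induction over greedy steps. So your proposal is correct in outline but misses the reduction that turns the ``delicate part'' into a few lines of algebra.
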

To analyze the implications of Theorem~\ref{thm:gen_approx}, let us consider few interesting cases.
Similarly as done before, the quantity $\src{\Riskh_\lambda}$ captures how well the source hypotheses are aligned with the target task and governs the asymptotic and finite sample regime.
In fact,
assume for any finite $m$ that there is at least one source hypothesis with small empirical risk, in particular, in
$\tilde{\scO}(\sqrt{k / m})$,
and set $\lambda = \tilde{\scO}(\sqrt{k / m})$.
Then we have that
$
\Risk(\htrg_{\bhatw, \bhatbeta}) - \Riskh(\htrg_{\bw^\star, \bbeta^\star}) = \tilde{\scO}\left(\sqrt{k / m}\right),
$
that is we get the generalization bound as if we are able to solve the original \textbf{NP}-hard problem in \eqref{eq:htl_subset_selection}.
In other words, if there are useful source hypotheses, we expect our algorithm to perform similarly to the one that identifies the optimal subset.
This might seem surprising, but it is important to note that we do not actually care about identifying the correct subset of source hypotheses.
We only care about how well the returned solution is able to generalize.
%
On the other hand, if not even one source hypothesis has low risk, selecting the best subset of $k$ sources becomes meaningless. In this scenario, we expect the selection of any subset to perform
in the same way. Thus the approximation guarantee does not matter anymore.
%

We now state the approximation guarantees of \verb!GreedyTL! used to prove Theorem~\ref{thm:gen_approx}.
In the following Corollary we show how far the optimal solution to the regularized subset selection is from the approximate one found by \verb!GreedyTL!.
\begin{cor}
  \label{cor:tikhonov_fr_approx}
Let $\lambda \in \reals^+$ and $k \leq n$.
Denote $\OPT := \min_{\|\bw\|_0 = k} \left\{ \Riskh(\bw) + \lambda \|\bw\|_2^2 \right\}$.
Assume that $\bhatC$ and $\bhatb$ are normalized, and $\hat{C}_{i,j\neq i} \leq \gamma < \frac{1 + \lambda}{6 k}$.
Then, \ac{FR} algorithm generates an approximate solution $\bhatw$ to the regularized subset selection problem that satisfies
$
  \Riskh(\bhatw) + \lambda \|\bhatw\|_2^2 \leq \left(1 + \frac{16(k + 1)^2 \gamma}{1 + \lambda} \right) \OPT - \frac{16 (k+1)^2 \gamma \lambda}{(1+\lambda)^2}.
$
\end{cor}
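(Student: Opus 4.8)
The natural approach is to turn the $L2$-regularized subset selection into an \emph{ordinary} (unregularized) subset-selection instance with a nearly-orthogonal design, apply the known analysis of \ac{FR}, and then unwind the change of variables while tracking the constants. Set $\bar{\bC}:=\tfrac{1}{1+\lambda}(\bhatC+\lambda\bI)$ and $\bar{\bb}:=\tfrac{1}{\sqrt{1+\lambda}}\,\bhatb$. Since the normalized sample matrix $\bhatC$ has unit diagonal and off-diagonal entries bounded by $\gamma$, the matrix $\bar{\bC}$ is again a correlation matrix, with off-diagonal entries bounded by $\gamma':=\tfrac{\gamma}{1+\lambda}<\tfrac{1}{6k}$. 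A short computation shows that for every index set $S$ of size $k$, $\min_{\supp(\bw)\subseteq S}\{\Riskh(\bw)+\lambda\|\bw\|_2^2\}=1-\bhatb_S\tp(\bhatC_S+\lambda\bI)^{-1}\bhatb_S=1-g(S)$, where $g(S):=\bar{\bb}_S\tp\bar{\bC}_S^{-1}\bar{\bb}_S$ and I use $\tfrac1m\|\by\|_2^2=1$ together with the identity $\bhatb_S\tp(\bhatC_S+\lambda\bI)^{-1}\bhatb_S=\bar{\bb}_S\tp\bar{\bC}_S^{-1}\bar{\bb}_S$. Hence, letting $S^\star$ denote an optimal size-$k$ support, $\OPT=1-g(S^\star)$. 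Crucially, the greedy rule of \verb!GreedyTL! --- add the index maximizing $\bhatb_S\tp((\bhatC+\lambda\bI)_S^{-1})\tp\bhatb_S$ --- is exactly the rule ``add the index maximizing $g(S\cup\{i\})$'', i.e.\ it is invariant under this rescaling; so the set $\hat{S}:=\supp(\bhatw)$ returned by \ac{FR} on the regularized instance coincides with the output of plain \ac{FR} run on $(\bar{\bC},\bar{\bb})$, and $\Riskh(\bhatw)+\lambda\|\bhatw\|_2^2=1-g(\hat{S})$.

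Next I would bound the deficiency $g(S^\star)-g(\hat{S})$ for this plain instance. The submodularity ratio of the set function $g$ is lower bounded by the smallest eigenvalue of the principal submatrices of $\bar{\bC}$ of order at most $k+1$; by Gershgorin each such eigenvalue lies in $[1-k\gamma',\,1+k\gamma']$, so under $\gamma'<\tfrac1{6k}$ these submatrices are uniformly well-conditioned (eigenvalues in $[\tfrac56,\tfrac76]$) and the \ac{FR} analysis of~\cite{das2008algorithms} applies. To get the \emph{sharp}, linear-in-$\gamma'$ deficiency with the explicit constant $16(k+1)^2$ --- rather than the crude constant-factor estimate coming from the black-box $(1-e^{-\gamma_{\mathrm{sub}}})$ bound --- I would estimate the greedy progress directly by perturbing around the orthogonal design: the Neumann expansion $\bar{\bC}_S^{-1}=\bI+(\bI-\bar{\bC}_S)+\cdots$ controls $|g(S)-\|\bar{\bb}_S\|_2^2|$ by $\scO(k\gamma')\|\bar{\bb}_S\|_2^2$ for all $|S|\le k+1$, and shows that one greedy step raises $\|\bar{\bb}_S\|_2^2$ by nearly $\max_{i\notin S}\bar b_i^2$; comparing the at most $k$ greedy iterates with $S^\star$ and summing yields a deficiency of order $16(k+1)^2\gamma'$ times the relevant scale, the factor $(k+1)^2$ coming from the double sum over the $\le k+1$ indices that appear when a greedy iterate is compared against $S^\star$.

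Finally I would unwind the change of variables. Writing the \ac{FR} guarantee from the previous step in the form $g(\hat{S})\ge g(S^\star)-16(k+1)^2\gamma'\bigl(\tfrac1{1+\lambda}-g(S^\star)\bigr)$ --- the comparison point $\tfrac1{1+\lambda}$ rather than $1$ arising because the rescaling shrank the ``signal'' $\bhatb$ by $(1+\lambda)^{-1/2}$ while the variance constant $\tfrac1m\|\by\|_2^2=1$ was left untouched --- and substituting $g(S^\star)=1-\OPT$ and $\epsilon:=\tfrac{16(k+1)^2\gamma}{1+\lambda}=16(k+1)^2\gamma'$, a line of algebra gives $\Riskh(\bhatw)+\lambda\|\bhatw\|_2^2=1-g(\hat{S})\le(1+\epsilon)\OPT-\tfrac{16(k+1)^2\gamma\lambda}{(1+\lambda)^2}$, which is the claim. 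I expect the middle step to be the main obstacle: extracting a genuinely tight, explicitly-constant, linear-in-$\gamma'$ bound on the \ac{FR} deficiency from the near-orthogonality hypothesis $\gamma<\tfrac{1+\lambda}{6k}$ requires a hands-on perturbation/telescoping argument rather than the off-the-shelf submodularity-ratio estimate, and it must be dovetailed with careful bookkeeping of every place a factor of $1/(1+\lambda)$ (or $\sqrt{1+\lambda}$) enters through the affine reduction.
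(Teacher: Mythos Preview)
Your high-level reduction is the same as the paper's: rescale the covariance to $\bar{\bC}=(1+\lambda)^{-1}(\bhatC+\lambda\bI)$, observe that $\bar{\bC}$ is again a correlation matrix with off-diagonals bounded by $\gamma'=\gamma/(1+\lambda)<1/(6k)$, note that the greedy rule is invariant under this rescaling, apply the known \ac{FR} guarantee for unregularized subset selection, and unwind. Two points, however, separate your plan from the paper's proof.

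First, the ``middle step'' you flag as the main obstacle does not exist. The bound $\Risk(\bhatw)\le(1+16(k+1)^2\gamma')\min_{\|\bw\|_0=k}\Risk(\bw)$ is exactly Theorem~\ref{thm:fr_approx} (from \cite{das2008algorithms}), and the paper invokes it as a black box. There is no need for a hands-on perturbation/telescoping argument, Neumann expansions, or any re-derivation of the constant $16(k+1)^2$; the whole corollary is five lines of algebra on top of that citation.

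Second, your choice to rescale $\bhatb$ as well, setting $\bar{\bb}=(1+\lambda)^{-1/2}\bhatb$, is what forces you to hand-wave the ``comparison point $\tfrac{1}{1+\lambda}$ rather than $1$'' --- and that hand-wave is the only genuinely shaky spot in your proposal, since Theorem~\ref{thm:fr_approx} is stated for \emph{normalized} $\bb$ and unit-variance $Y$. The paper sidesteps this entirely by leaving $\bhatb$ untouched and applying Theorem~\ref{thm:fr_approx} to the pair $(\bhatC',\bhatb)$. The surrogate risk is then $1-\bhatb_S\tp(\bhatC_S')^{-1}\bhatb_S$, and the multiplicative guarantee reads
\[
1-\bhatb_{\hat S}\tp(\bhatC_{\hat S}')^{-1}\bhatb_{\hat S}\ \le\ (1+\epsilon)\bigl(1-\bhatb_{S^\star}\tp(\bhatC_{S^\star}')^{-1}\bhatb_{S^\star}\bigr),\qquad \epsilon=16(k+1)^2\gamma'.
\]
Dividing by $1+\lambda$, adding $\lambda/(1+\lambda)$ to both sides, and using $\Riskh(\bw)+\lambda\|\bw\|_2^2=1-\tfrac{1}{1+\lambda}\bhatb_S\tp(\bhatC_S')^{-1}\bhatb_S$ immediately yields $(1+\epsilon)\OPT-\epsilon\lambda/(1+\lambda)$, i.e.\ the claimed inequality with the negative term falling out of the algebra rather than needing a separate justification. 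If you drop the rescaling of $\bhatb$ and cite Theorem~\ref{thm:fr_approx}, your proof collapses to exactly this.
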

Apart from being instrumental in the proof of Theorem~\ref{thm:gen_approx}, this statement also points to the secondary role of the regularization parameter $\lambda$: unlike in \ac{FR}, we can control the quality of the approximate solution even if the features are correlated.

\section{Experiments}
\label{sec:exps}
In this section we present 
experiments
comparing \verb!GreedyTL! to several 
transfer learning and 
feature selection algorithms.
As done previously, we considered the 
object detection task and,
for all datasets,
 we left out one class considering it as the target class, while the remaining classes were treated as sources~\cite{tommasi2013learning}.
We repeated this procedure for every class and for every dataset at hand, and averaged the performance scores.
In the following, we refer to this procedure as \emph{leave-one-class-out}.
We performed the evaluation for every class, 
reporting averaged class-balanced recognition scores.


We used subsets of Caltech-256~\cite{griffinHolubPerona}, Imagenet~\cite{deng2009imagenet}, SUN09~\cite{choi2010exploiting}, SUN-$397$~\cite{xiao2010sun}.
The largest setting considered involves $1000$ classes, totaling in $1.2$M examples, where the number of training examples of the target domain varies from $11$ to $20$.
Our experiments aimed at verifying three claims:
\begin{enumerate}[I.]
\item $L2$-regularization is important when using greedy feature selection as a transfer learning scheme.
\item In a small-sample regime \verb!GreedyTL! is more robust than alternative feature selection approaches, such as $L1$-regularization.
\item The approximated randomized greedy algorithm improves the computational complexity of \verb!GreedyTL! with no significant loss in performance.
\end{enumerate}
%
%
\subsection{Datasets and Features}
We used
the whole Caltech-256, 
a public subset of Imagenet containing $10^3$ classes, 
all the classes of SUN09 that have more than $1$ example, which amounts to $819$ classes,
and the whole SUN-$397$ dataset containing $397$ place categories.
For Caltech-256 and Imagenet, we used as features the publicly-available $1000$-dimensional SIFT-BOW 
descriptors, while for SUN09 we extracted $3400$-dimensional PHOG descriptors.
In addition, for Imagenet and SUN-$397$, we also ran experiments using convolutional features extracted from DeCAF neural network~\cite{decaf}.

We composed a negative class by merging $100$ held-out classes (\emph{surrogate} negative class).
We did so for each dataset, and we further split it into the \emph{source} negative and the \emph{target} negative class as $90\%+10\%$ respectively, for training sources and the target.
The source classifiers were trained for each class in the dataset, combining all the positive examples of that class and the source negatives.
On average, each source classifier was trained using $10^4$ examples for the Caltech-256, $10^5$ for Imagenet and $10^3$ for the SUN09 dataset.
%
The training sets for the target task were composed by $\{2, 5, 10\}$ positive examples, and $10$ negative ones.
Following~\cite{tommasi2013learning}, the testing set contained $50$ positive and $50$ negative examples for Caltech-256, Imagenet, and SUN-$397$.
For the skewed SUN09 dataset we
took one positive and $10$ negative training examples, with the rest left for testing.
We drew each target training and testing set randomly $10$ times, averaging the results over them.
%
%
\subsection{Baselines}
\label{sec:baselines}
%
We chose a linear SVM to train the source classifiers~\cite{fan2008liblinear}.
This allows us to compare fairly with relevant baselines (like Lasso) and is in line with recent trends in large scale visual recognition and transfer learning \cite{decaf}.
The models were selected
by $5$-fold cross-validation having regularization parameter $C \in \{10^{-4}, 10^{-3}, \cdots, 10^{4}\}$.
In addition to trained source classifiers, for the Caltech-256, we also evaluated transfer from Classemes~\cite{classemes} and Object Bank~\cite{li2010object}, which are very similar in spirit to source classifiers.
At the same time, for Imagenet, we evaluated transfer from the outputs of the final layers of the DeCAF convolutional neural network~\cite{decaf}.

We divided the baselines into two groups - the linear transfer learning baselines that do not require access to the source data, and the feature selection baselines.
%
We included the second group of baselines due to \verb!GreedyTL!'s resemblance to a feature selection algorithm.
We focus on the linear baselines, since we are essentially interested in the feature selection in high-dimensional spaces from few examples.
In that scope, most feature selection algorithms, such as Lasso, are linear.
In particular, amongst TL baselines we chose:
\emph{No transfer}: \ac{RLS} algorithm trained solely on the target data;
\emph{Best source}: indicates the performance of the best source classifier selected by its score on the testing set. This is a pseudo-indicator of what an \ac{HTL} can achieve;
\emph{AverageKT}: obtained by averaging the predictions of all the source classifiers;
\emph{RLS src+feat}: \ac{RLS} trained on the concatenation of feature descriptors and source classifier predictions;
\emph{MultiKT $\|\cdot\|_2$}: \ac{HTL} algorithm by~\cite{tommasi2013learning} selecting $\bbeta$ in~\eqref{eq:eq_transf} by minimizing the leave-one-out error subject to $\|\bbeta\|_2 \leq \tau$;
\emph{MultiKT $\|\cdot\|_1$}: similar to previous, but applying the constraint $\|\bbeta\|_1 \leq \tau$;
\emph{DAM}: An \ac{HTL} algorithm by~\cite{duan2009domain}, that can handle selection from multiple source hypotheses. It was shown to perform better than the well known and similar ASVM~\cite{yang2007cross} algorithm.
For the feature selection baselines we selected well-established algorithms involving sparsity assumption:
\emph{L1-Logistic}: Logistic regression with $L1$ penalty~\cite{hastie2009elements};
\emph{Elastic-Net}: Logistic regression with mixture of $L1$ and $L2$ penalties~\cite{hastie2009elements};
\emph{Forward-Reg}: Forward regression -- a classical greedy feature selection algorithm.
When comparing our algorithms to the baselines on large datasets, we also consider a Domain Adaptive \emph{Dictionary Learning} baseline~\cite{qiu2012domain}.
This baseline represents the family of dictionary learning methods for domain adaptation and transfer learning.
In particular, it learns a dictionary on the source domain and adapts it to the target one.
However, in our setup the only access to the source data is through the source hypotheses. Therefore, the only way to construct source features is by using the source hypotheses on the target data points.
%
\newlength{\subfiglength}
%
%
%
\subsection{Results}
Figure~\ref{fig:accuracies} shows the leave-one-class-out performance. 
In addition, Figures~\ref{fig:acc_caltech256_classemes}, \ref{fig:acc_caltech256_object_bank}, \ref{fig:acc_imagenet_decaf} show the performance when transferring from off-the-shelf classemes, object-bank feature descriptors, and DeCAF neural network activations.
\begin{figure*}[!t]
  \caption{Performance on the Caltech-256, subsets of Imagenet (1000 classes) and SUN09 (819 classes).
  Averaged class-balanced accuracies in the leave-one-class-out setting.
  }
  \centering
  \setlength{\subfiglength}{4.8cm}
  \subfloat[Caltech-256]{%
    \includegraphics[height=\subfiglength]{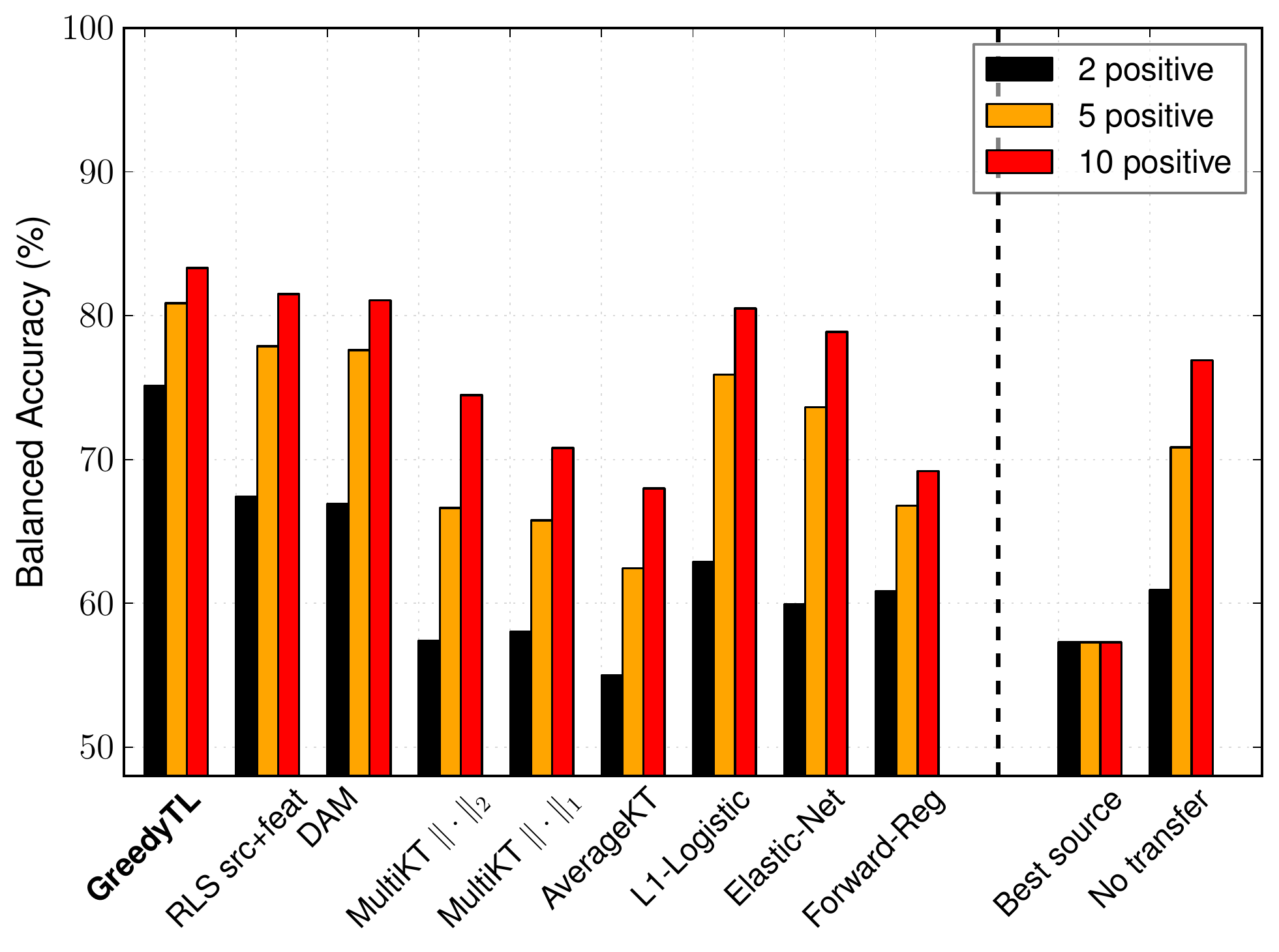}
    \label{fig:acc_caltech256}
  }%
  \subfloat[Caltech-256 (Classemes)]{%
    \includegraphics[height=\subfiglength]{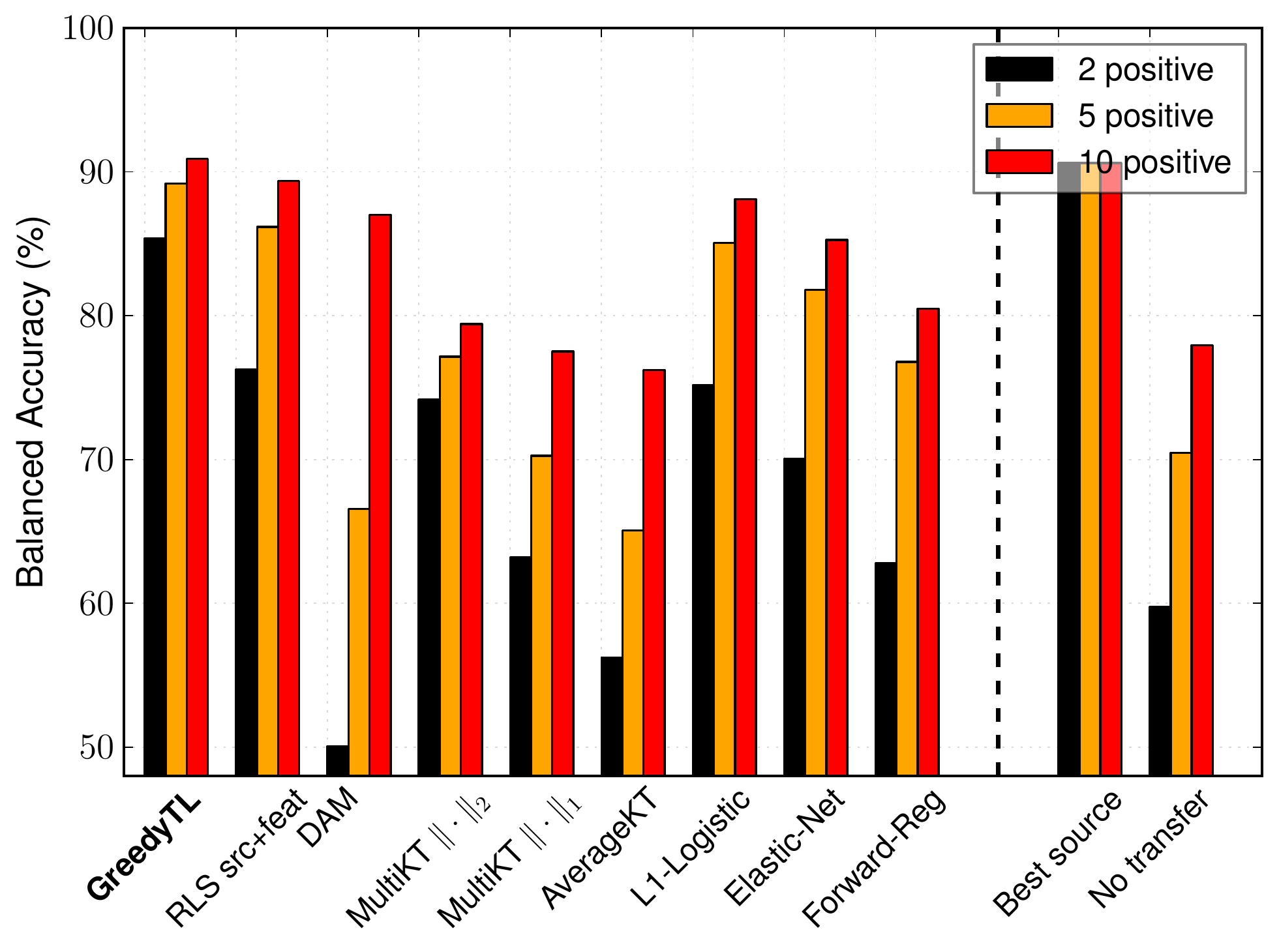}
    \label{fig:acc_caltech256_classemes}
  }\\
  \vspace{-0.2cm}
  \subfloat[Caltech256 (Object Bank)] {%
    \includegraphics[height=\subfiglength]{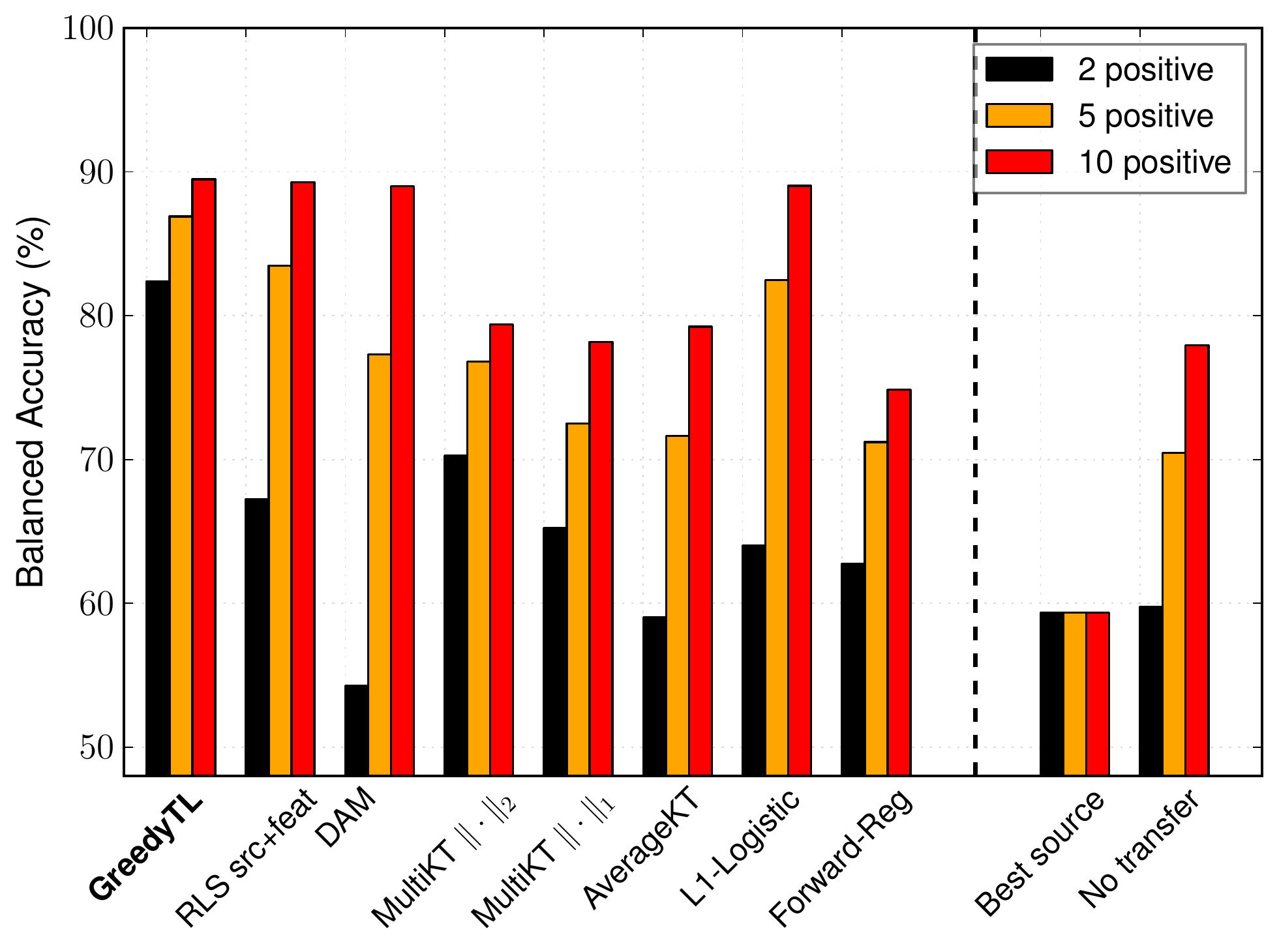}
    \label{fig:acc_caltech256_object_bank}
  }%
  \subfloat[SUN09] {%
    \includegraphics[height=\subfiglength]{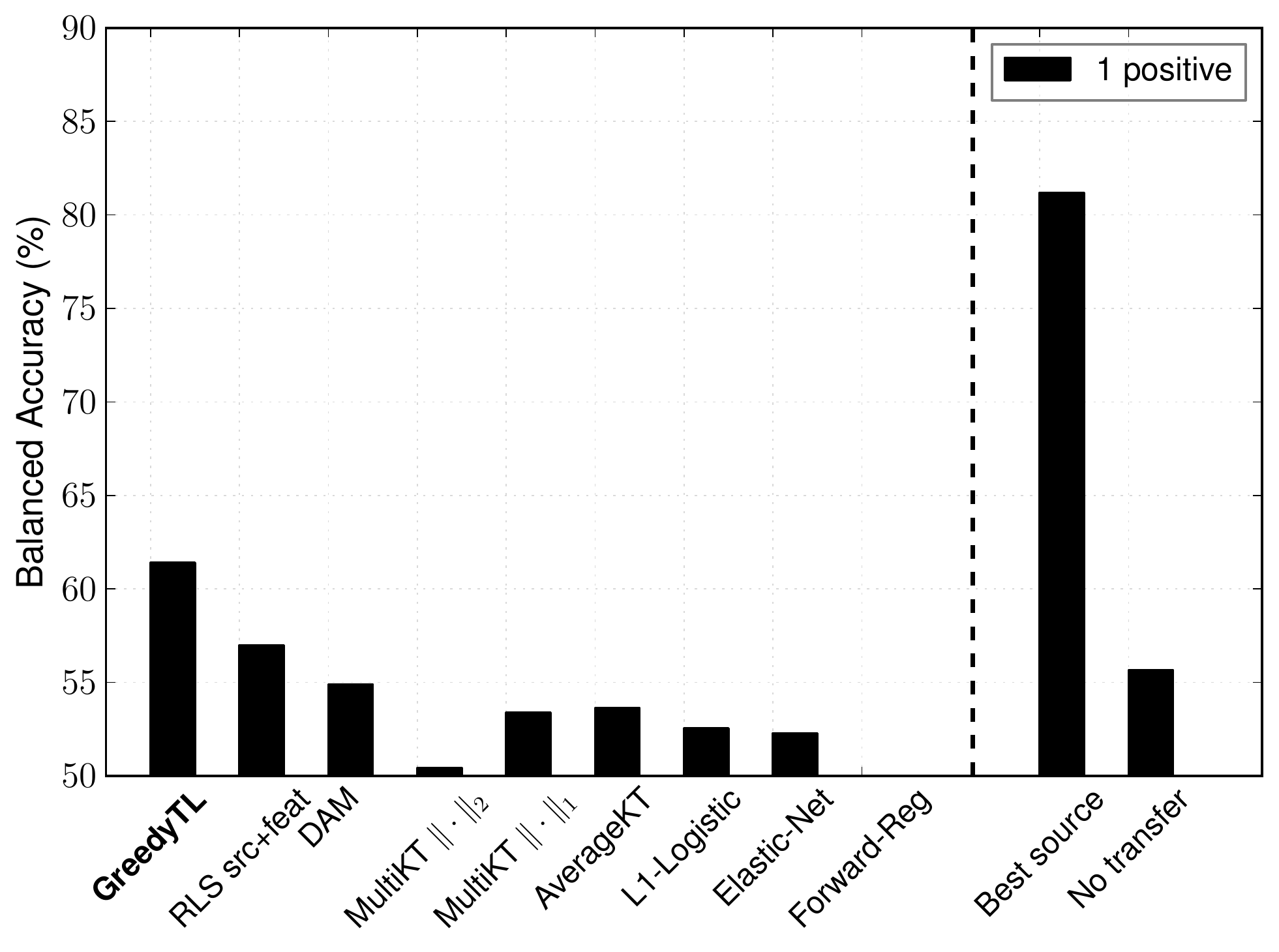}
    \label{fig:acc_sun09}
  }\\
  \vspace{-0.2cm}
  \subfloat[Imagenet ($1000$ classes)] {%
    \includegraphics[height=\subfiglength]{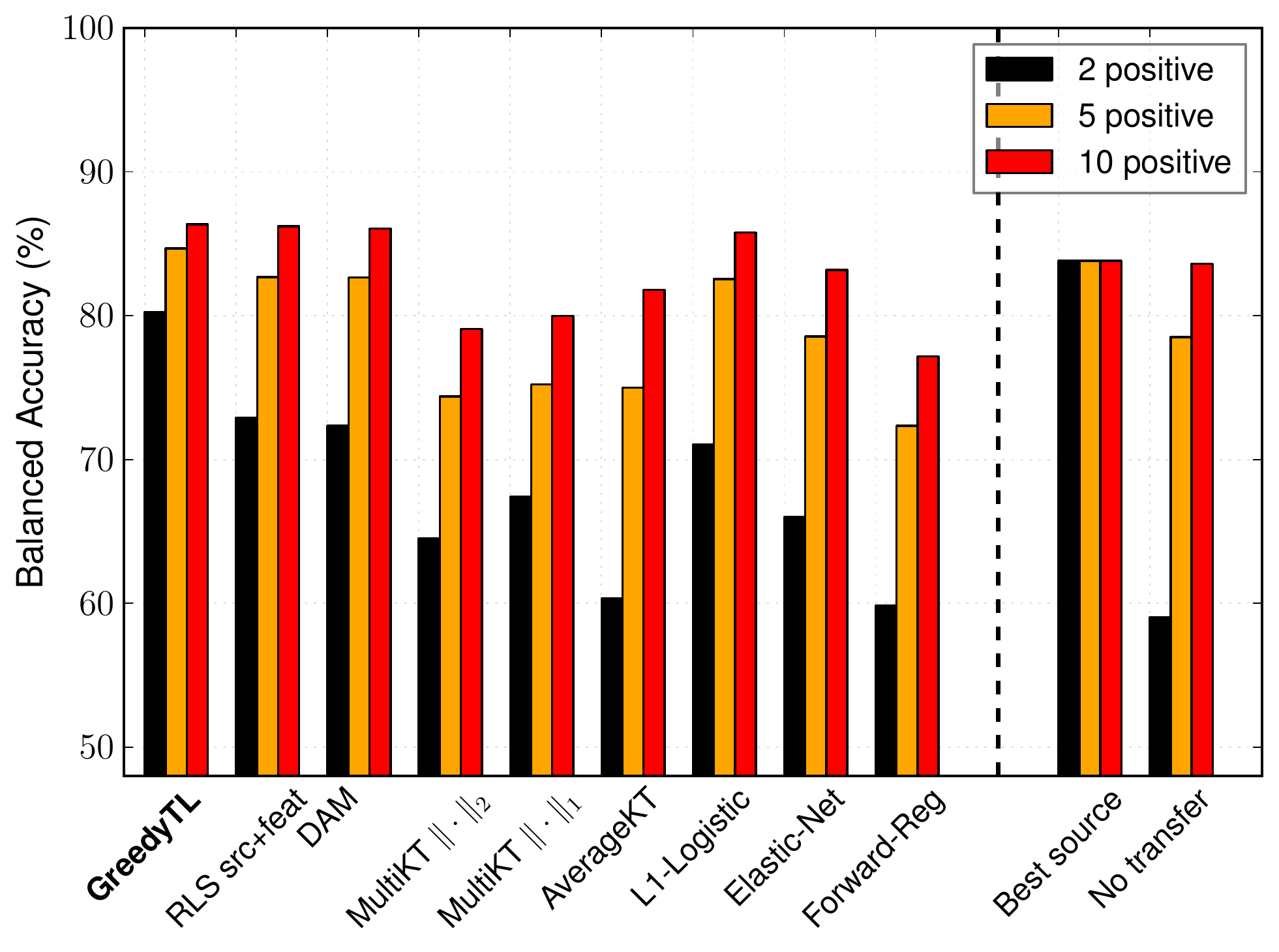}
    \label{fig:acc_imagenet}
  }%
  \subfloat[Imagenet (sources are DeCAF outputs, $1000$ classes)] {%
    \includegraphics[height=\subfiglength]{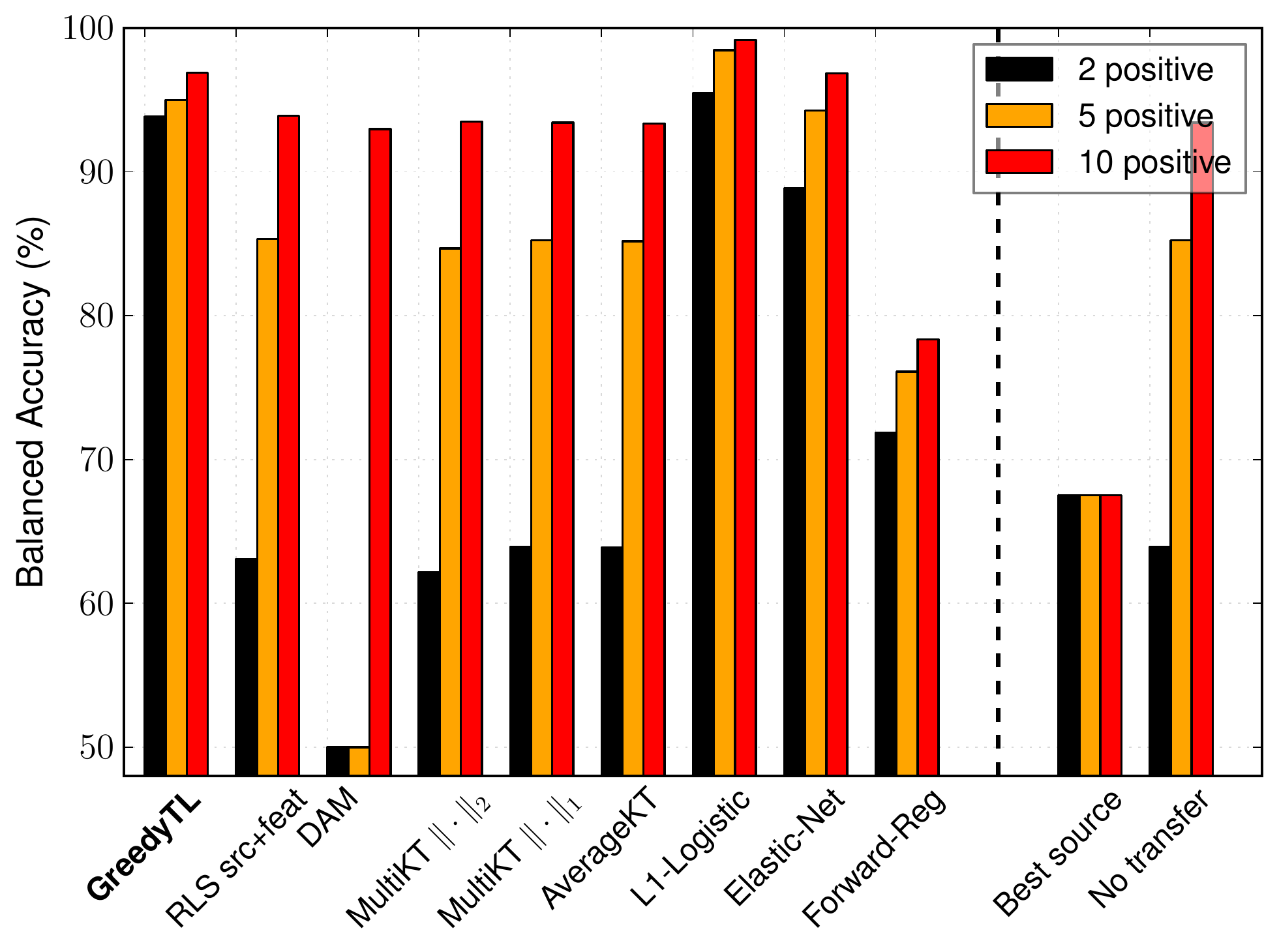}
    \label{fig:acc_imagenet_decaf}
  }\\
\label{fig:accuracies}
\end{figure*}
Whenever any baseline algorithm has hyperparameters to tune, we chose the ones that minimize the leave-one-out error on the training set.
In particular, we selected the regularization parameter $\lambda \in \{10^{-4}, 10^{-3}, \ldots, 10^4\}$.
MultiKT and DAM have an additional hyperparameter that we call $\tau$ with $\tau \in \{10^{-3}, \ldots, 10^3\}$.
Kernelized algorithms were supplied with a linear kernel.
%
Model selection for \verb!GreedyTL! involves two hyperparameters, that is $k$ and $\lambda$.
Instead of fixing $k$, we let \verb!GreedyTL! select features as long as the regularized error between two consecutive steps is larger than $\delta$.
%
\begin{figure*}[t]
  \caption{Baselines and number of additional noise dimensions sampled from a standard distribution.
    Averaged class-balanced recognition accuracies in the leave-one-class-out setting.}
  \centering
  \setlength{\subfiglength}{5.5cm}
  \subfloat[Caltech-256]{%
    \includegraphics[width=\subfiglength]{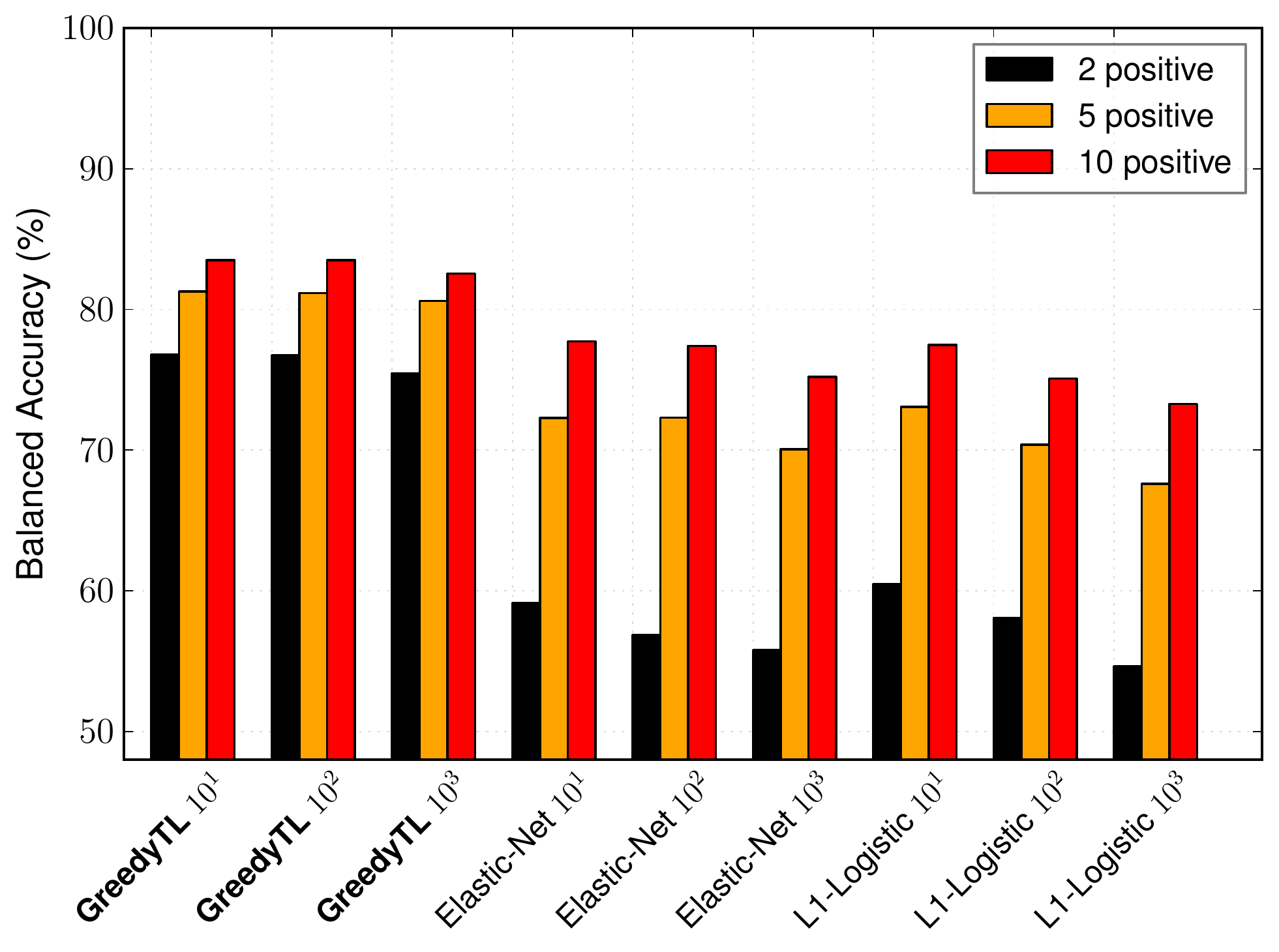}
    \label{fig:acc_caltech256_noise}
  }%
  \subfloat[Imagenet] {%
    \includegraphics[width=\subfiglength]{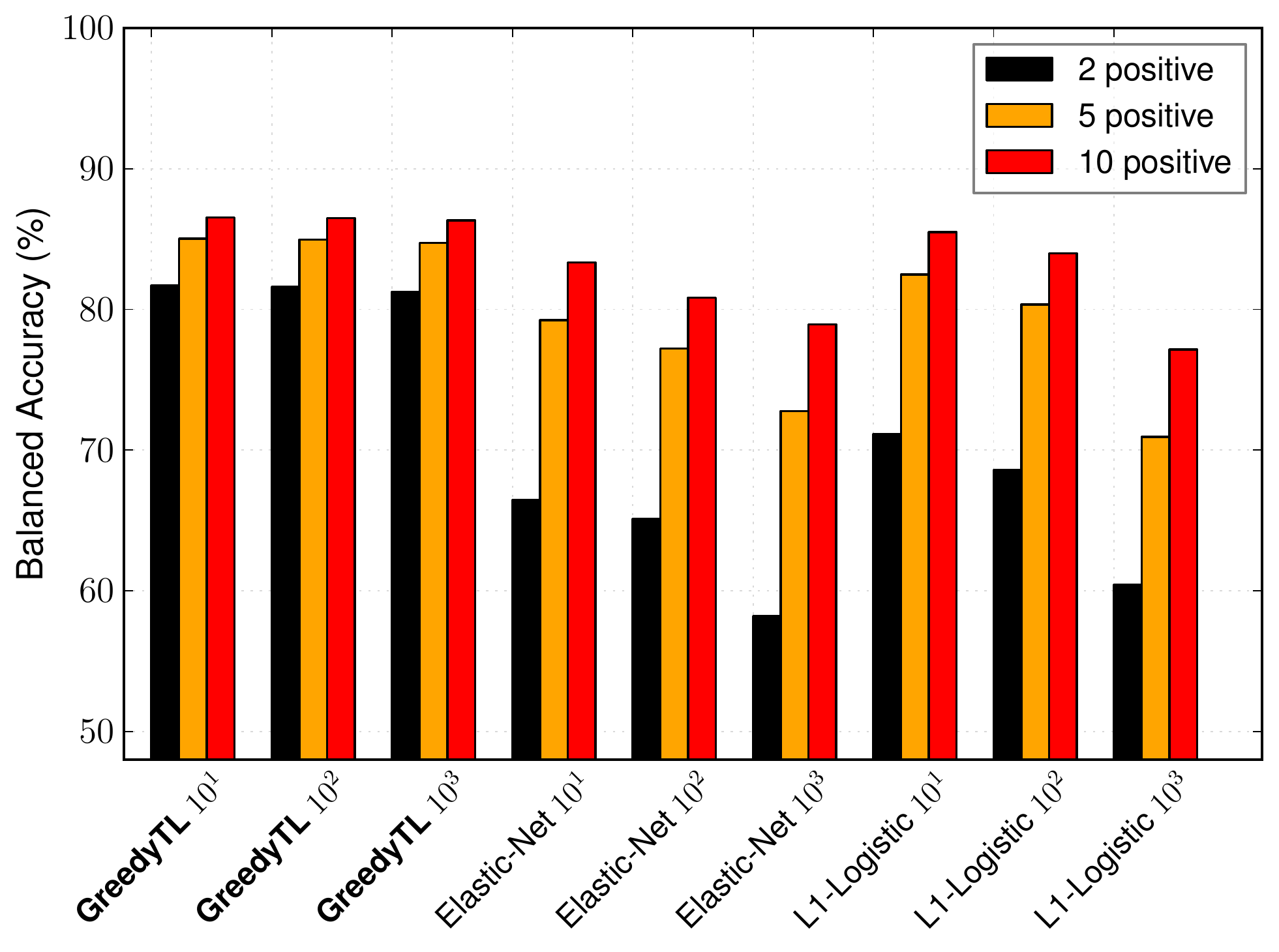}
    \label{fig:acc_imagenet_noise}
  }
\label{fig:accuracies_noise}
\vspace{-0.5cm}
\end{figure*}
%
In particular, we set $\delta = 10^{-4}$, as in preliminary experiments we have not observed any gain in performance past that point.
The $\lambda$ is fixed to $1$. 
Even better performance could be obtained tuning it.
%

%
%
%
%
We see that \verb!GreedyTL! dominates TL and feature selection baselines throughout the benchmark, rarely appearing on-par, especially in the small-sample regime.
In addition, on two datasets out of three, it manages to identify the source classifier subset that performs comparably or better than the Best source, that is the single best classifier selected by its performance on the testing set.
The significantly stronger performance achieved by \verb!GreedyTL! w.r.t. \ac{FR}, on all databases and in all settings, confirms the importance of the regularization in our formulation.

Notably, \verb!GreedyTL! outperforms RLS src+feat, which is equivalent to \verb!GreedyTL! selecting all the sources and features.
This observation points to the fact that \verb!GreedyTL! successfully manages to discard irrelevant feature dimensions and sources.
To investigate this important point further, we artificially add $10$, $100$ and $1000$ dimensions of pure noise sampled from a standard distribution.
Figure~\ref{fig:accuracies_noise} compares feature selection methods to \verb!GreedyTL! in robustness to noise.
Clearly, in the small-sample setting, \verb!GreedyTL! is tolerant to large amount of noise, while $L1$ and $L1/L2$ regularization suffer a considerable loss in performance.
We also draw attention to the failure of $L1$-based feature selection methods and MultiKT with $L1$ regularization to match the performance of \verb!GreedyTL!.
\subsection{Approximated GreedyTL}
\label{sec:exps_approximated_greedytl}
As was discussed in Section~\ref{tl}, the computational complexity of \verb!GreedyTL! is linear in the number of source hypotheses and feature dimensions. 
In this section we assess empirical performance of the approximated \verb!GreedyTL!,
which is \emph{independent} from the number of source hypotheses, implemented through the approximated greedy algorithm described at the end of Section~\ref{tl}.
In the following we refer to this version of an algorithm as \verb!GreedyTL-59!.
Instead of considering all the transfer learning and feature selection baselines, we restrict the performance comparison to the strongest competitors.
To show the power of highly scalable approximated \verb!GreedyTL!, we focus on the largest datasets in the number of source hypotheses and feature dimensions: Imagenet and SUN-$397$.
In case of Imagenet, we consider standard SIFT-BOW features as in previous section and also DeCAF-7 convolutional features extracted from the seventh layer of the DeCAF neural network~\cite{decaf}.
For the SUN-$397$, we use convolutional features of Caffe network trained on the Places-$205$ dataset~\cite{zhou2014learning}, which was shown to perform particularly well in the scene recognition tasks.
Figure~\ref{fig:new_accuracies} summarizes new results.
\begin{figure*}[!t]
  \caption{Comparison of the approximated GreedyTL: GreedyTL-59 to GreedyTL with exhaustive search and most competitive baselines on three largest datasets considered in our experiments.}
  \centering
  \setlength{\subfiglength}{4.8cm}
  \subfloat[Imagenet (SIFT-BOW, $1000$ classes)]{%
    \includegraphics[height=\subfiglength]{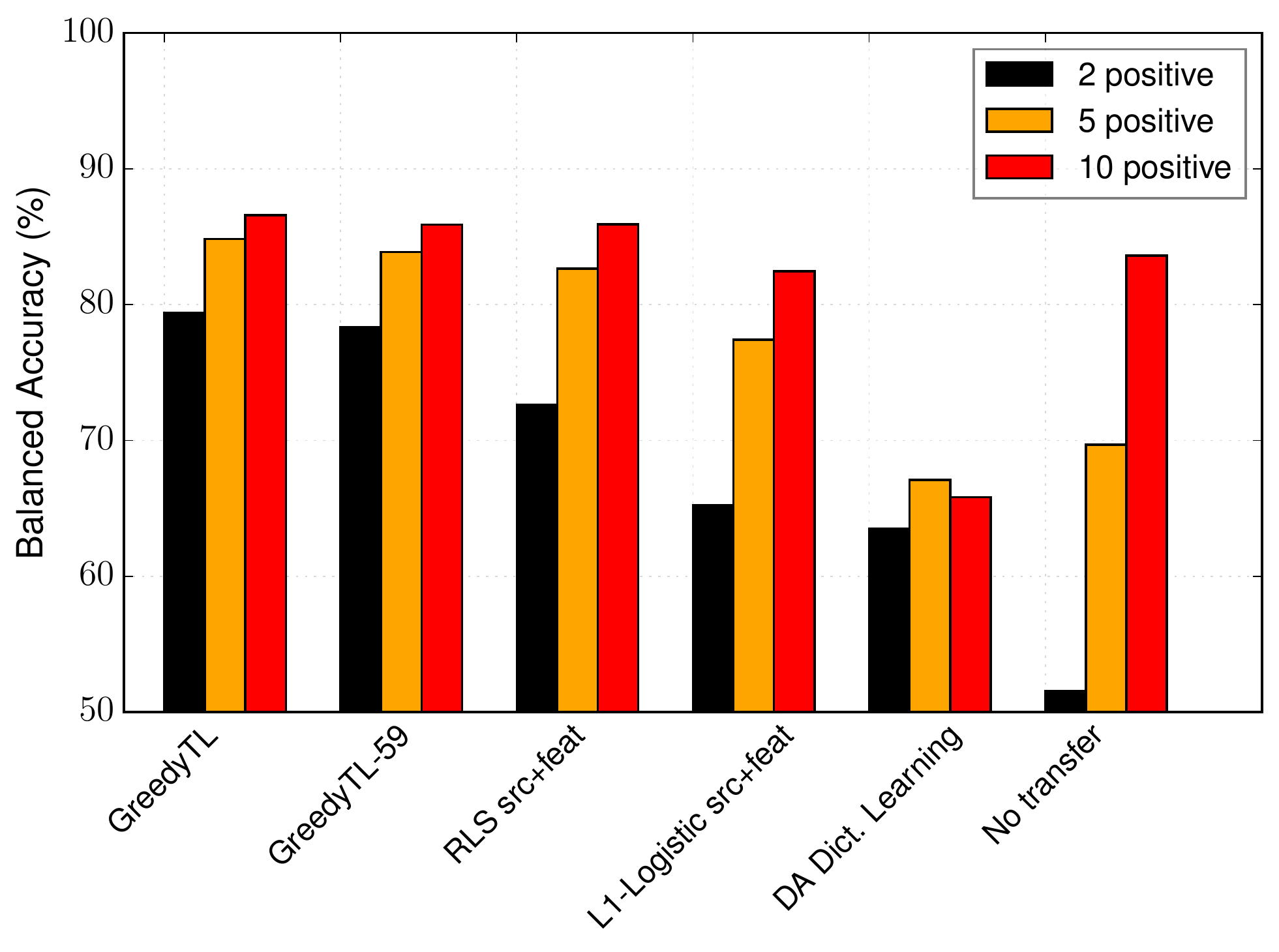}
    \label{fig:new_acc_ilsvrc_sift}
  }%
  \subfloat[Imagenet (DECAF-7 features, $1000$ classes)]{%
    \includegraphics[height=\subfiglength]{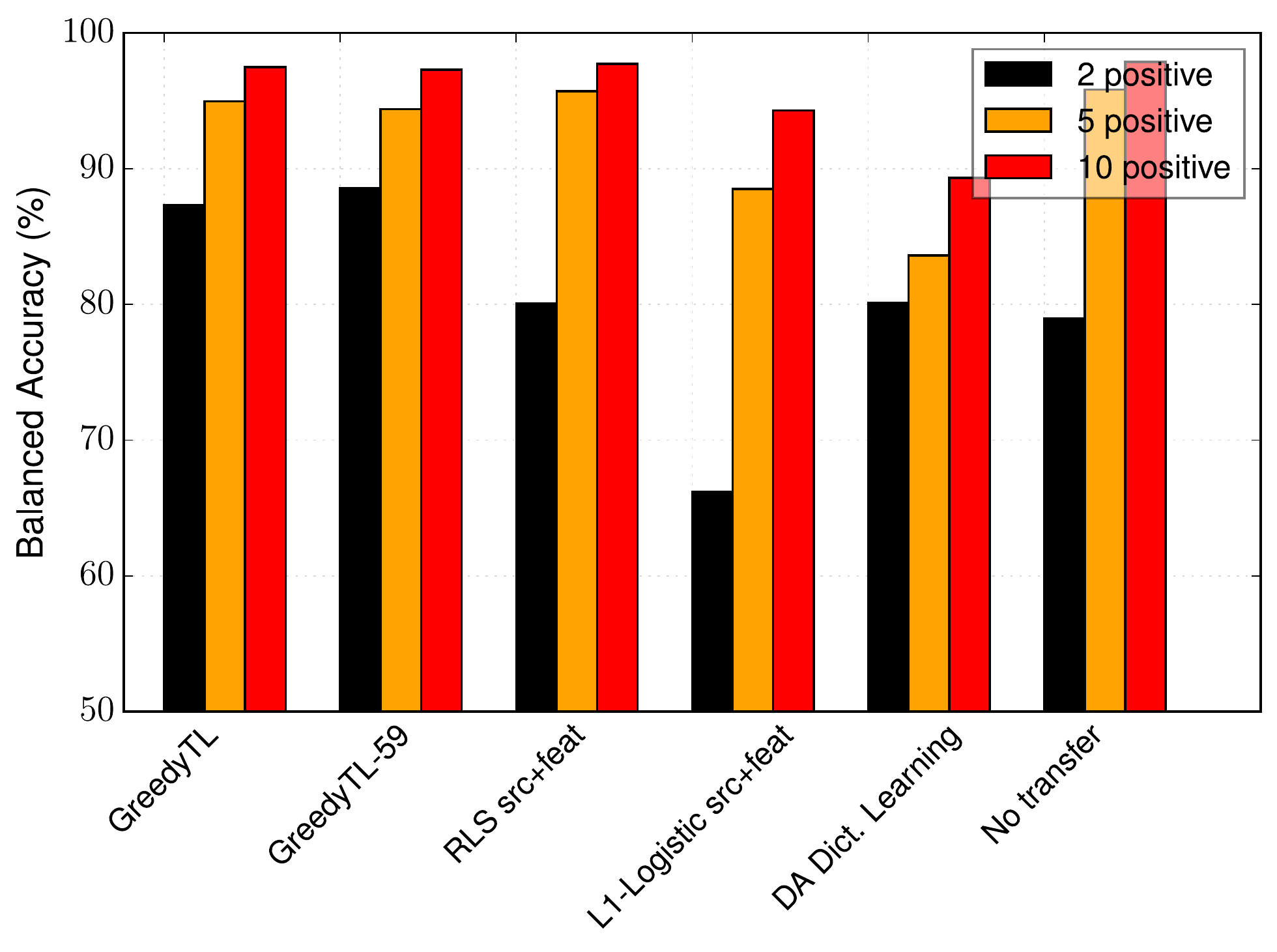}
    \label{fig:new_acc_ilsvrc_decaf}
  }\\
  \subfloat[SUN-$397$ (Caffe-7 features, $1000$ classes)]{%
    \includegraphics[height=\subfiglength]{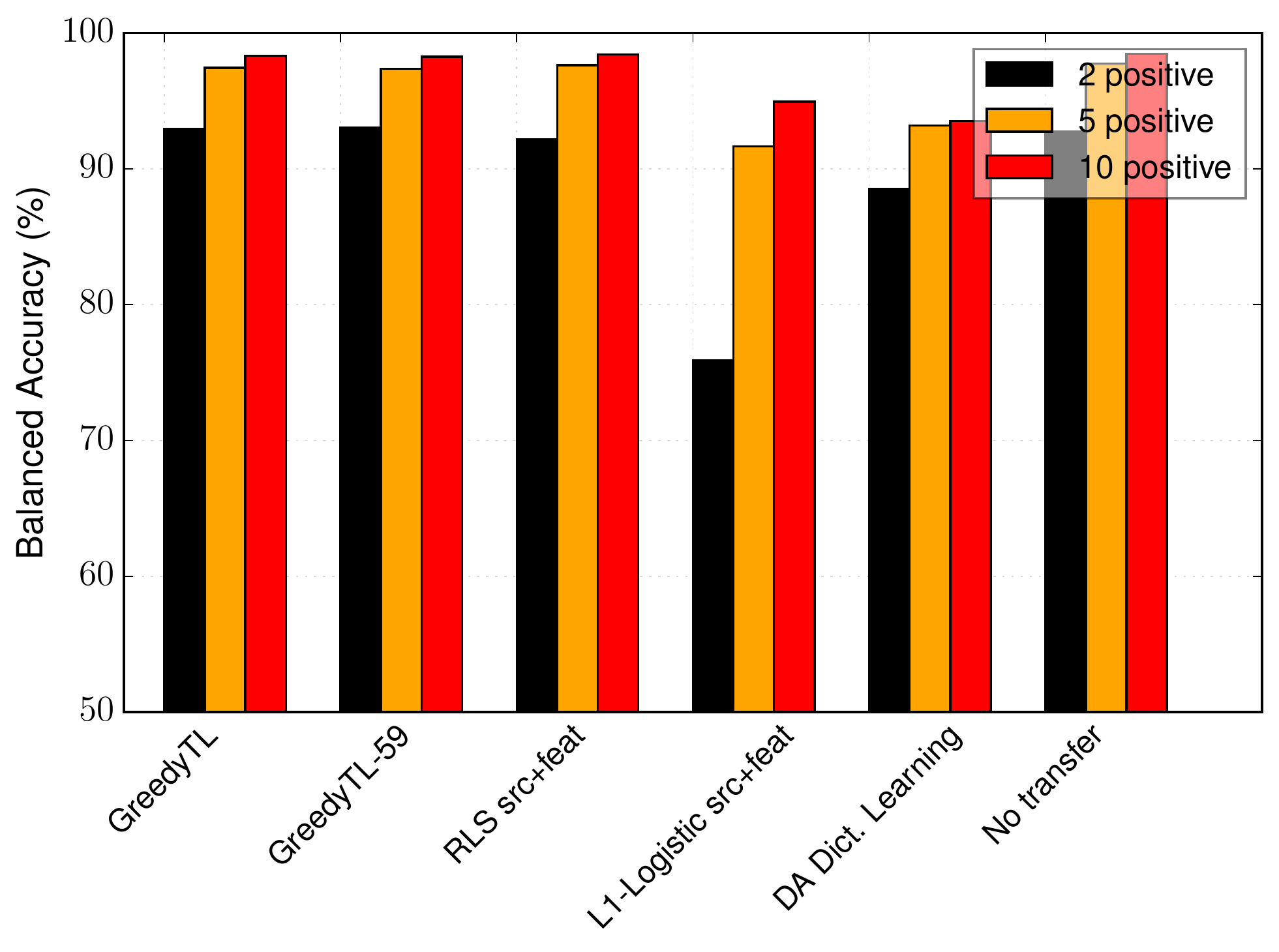}
    \label{fig:new_acc_sun397_caffe}
  }
\label{fig:new_accuracies}
\end{figure*}
Surprisingly, approximated \verb!GreedyTL! performs on par with the version with exhaustive search over the candidate, maintaining dominant performance in the small-sample regime on the Imagenet dataset.
Yet, training timings are dramatically improved as can be seen from Table~\ref{tab:timings}.
In the case of SUN-$397$ dataset, however, \verb!GreedyTL! performs on par with the top competitors.
%
%

\begin{table}[]
\scriptsize
\centering
\caption{Training time in seconds for transferring to a single target class.
Results are averaged over $10$ splits.}
\label{tab:timings}
\begin{tabular}{ll|l|l|l|}
\cline{3-5}
                                          & & GreedyTL~~~~~~~~~~~~~~~~& &                                                    \\ \hline
\multicolumn{2}{|l|}{Training examples pos.+neg.} & $2+10$       & $5+10$       & $10+10$                                    \\ \hline
\multicolumn{1}{|l|}{Imagenet (SIFT-BOW)} & $1899$ source+dim & $1.541 \pm 0.242$        & $3.083 \pm 0.486$        & $5.291 \pm 0.870$          \\ \hline
\multicolumn{1}{|l|}{Imagenet (DECAF7)}   & $4995$ source+dim & $3.481 \pm 0.356$        & $7.492 \pm 0.655$        & $13.408 \pm 1.165$         \\ \hline
\multicolumn{1}{|l|}{SUN-397 (Caffe-7)}   & $4492$ source+dim & $3.245 \pm 0.495$        & $6.764 \pm 1.051$        & $11.282 \pm 1.630$         \\ \hline
\end{tabular}

\begin{tabular}{ll|l|l|l|}
\cline{3-5}
                                          & & GreedyTL-59 & &                                                    \\ \hline
\multicolumn{2}{|l|}{Training examples pos.+neg.} & $2+10$       & $5+10$       & $10+10$                                    \\ \hline
\multicolumn{1}{|l|}{Imagenet (SIFT-BOW)}& $1899$ source+dim & $0.043 \pm 0.005$            & $0.088 \pm 0.011$           & $0.149 \pm 0.021$   \\ \hline
\multicolumn{1}{|l|}{Imagenet (DECAF7)}  & $4995$ source+dim & $0.055 \pm 0.006$            & $0.114 \pm 0.013$           & $0.198 \pm 0.020$  \\ \hline
\multicolumn{1}{|l|}{SUN-397 (Caffe-7)}  & $4492$ source+dim & $0.060 \pm 0.021$            & $0.120 \pm 0.038$           & $0.198 \pm 0.055$  \\ \hline
\end{tabular}
\end{table}

\subsection{Selected Source Analysis}
In this section we take a look at the source hypotheses selected by \verb!GreedyTL!. 
In particular, we make a qualitative assessment with the goal to see if semantically related sources and targets are correlated, visualizing selected sources and the magnitude of their weights.
We do so by grouping sources and targets semantically according to the WordNet~\cite{miller1995wordnet} distance, and plotting them as matrices with columns corresponding to targets, rows to sources, and entries to the weights of the sources.
Figure~\ref{lab:greedytl_ilsvrc_semantic} shows such matrices for \verb!GreedyTL! when evaluated on Imagenet with DECAF7 features and averaged over all splits, for $2$ positive and $10$ positive examples accordingly.
First we note, that for certain supercategories there are clearly distinctive patterns, indicating cross-transfer within the same supercategory.
\begin{figure*}[!t]
  \caption{Semantic transferrability matrix for GreedyTL evaluated on Imagenet (DECAF7 features). Columns correspond to targets and rows to sources. Stronger color intensity means larger source weight. \ref{fig:greedytl_ilsvrc_semantic_1} corresponds to learning from $2$ positive and $10$ negative examples, while~\ref{fig:greedytl_ilsvrc_semantic_2}, with $10$ positive.b}
  \label{lab:greedytl_ilsvrc_semantic}
  \centering
  \setlength{\subfiglength}{5.7cm}
  \subfloat[]{
    \includegraphics[height=\subfiglength]{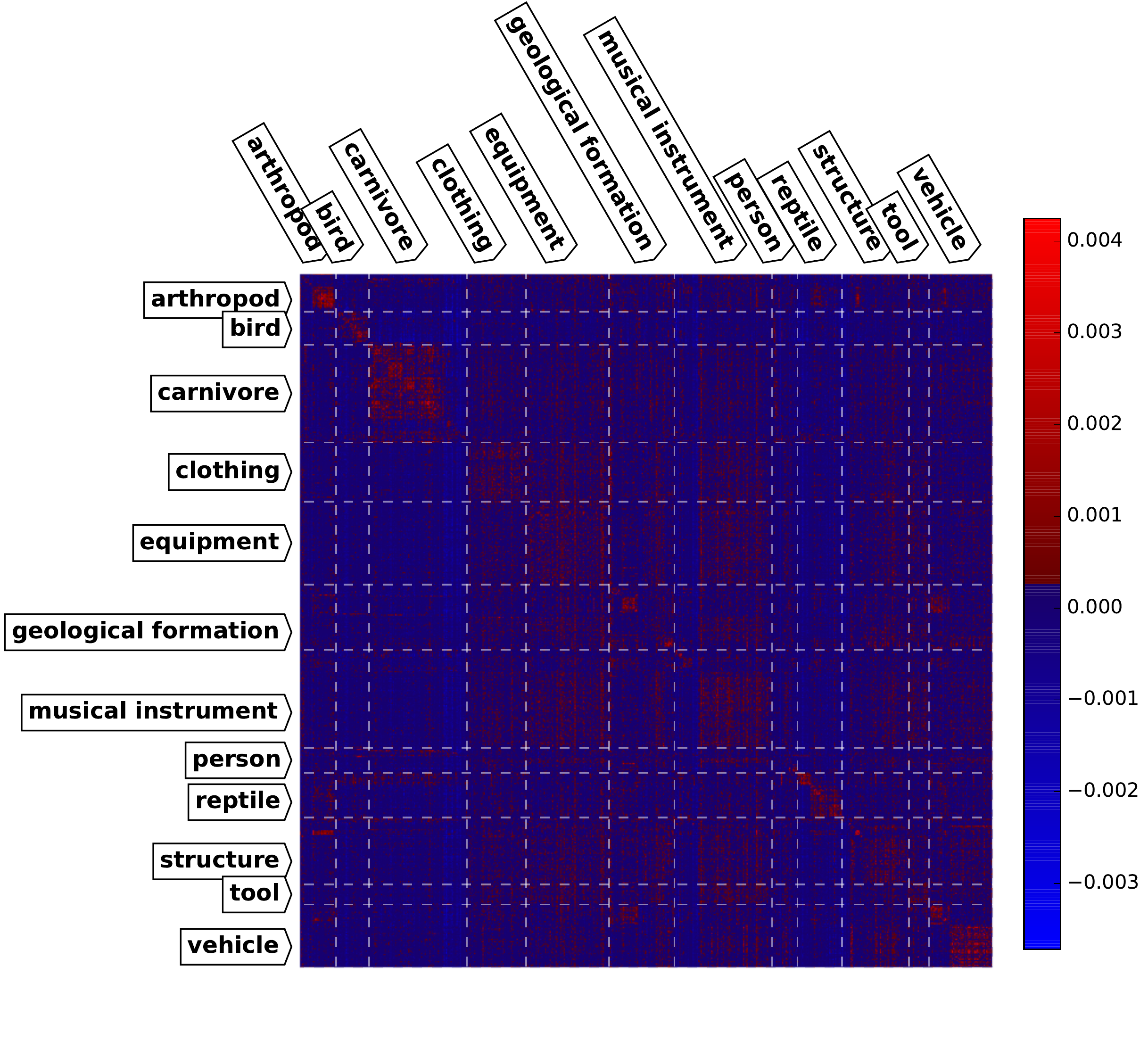}
    \label{fig:greedytl_ilsvrc_semantic_1}
  }
  \subfloat[]{
    \includegraphics[height=\subfiglength]{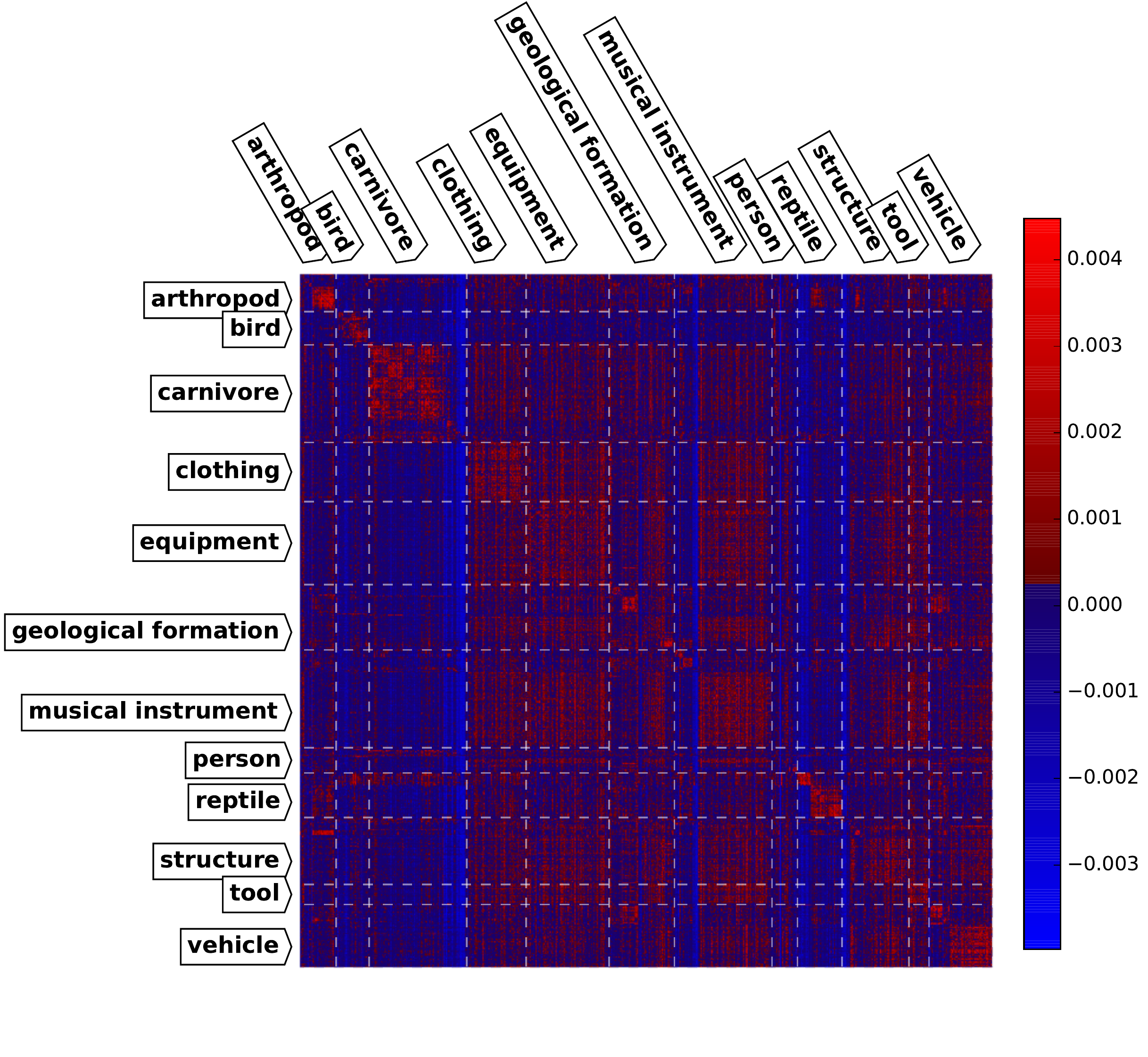}
    \label{fig:greedytl_ilsvrc_semantic_2}
  }\\
\end{figure*}
We compare those matrices to the ones originating from the strongest \verb!RLS (src+feat)! baseline, Figure~\ref{fig:rls_ilsvrc_semantic}.
We notice a clear difference, as semantic patterns of \verb!GreedyTL! are more distinctive in a small-sample setting (2+10), while the ones of \verb!RLS (src+feat)! appear hazier.
We argue that this is a consequence of greedy selection procedure implemented by \verb!GreedyTL!, where sources are selected incrementally, thus many coefficients correspond to zeros.
Due to the formulation of \verb!RLS (src+feat)!, however, even if a source is less relevant, its coefficient most likely will not be exactly equal to zero.
%
\begin{figure*}[!t]
  \caption{Semantic transferrability matrix for RLS (src+feat) evaluated on Imagenet (DECAF7 features).}
  \label{fig:rls_ilsvrc_semantic}
  \centering
  \setlength{\subfiglength}{5.7cm}
  \subfloat[]{
    \includegraphics[height=\subfiglength]{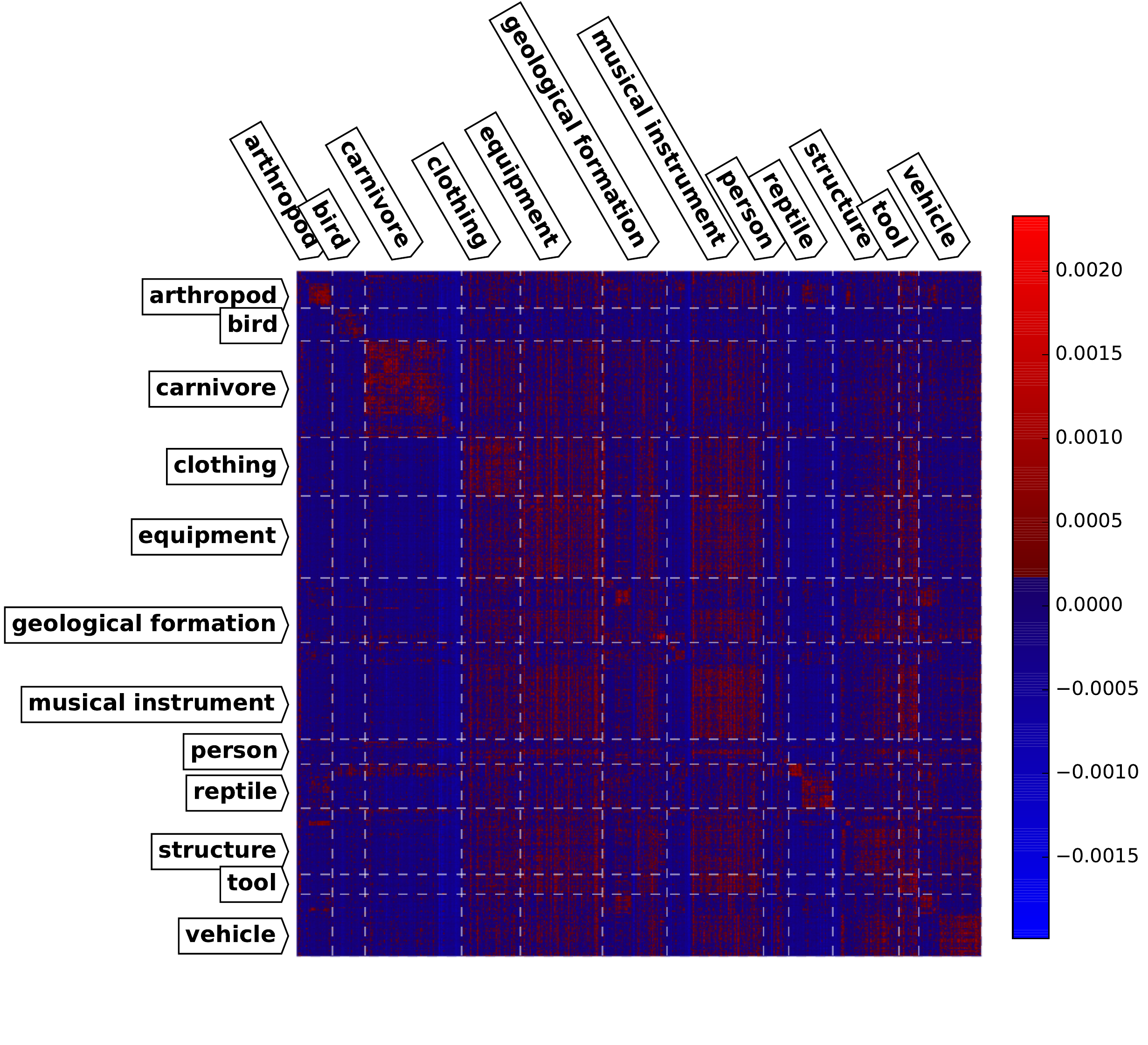}
  }%
  \subfloat[]{
    \includegraphics[height=\subfiglength]{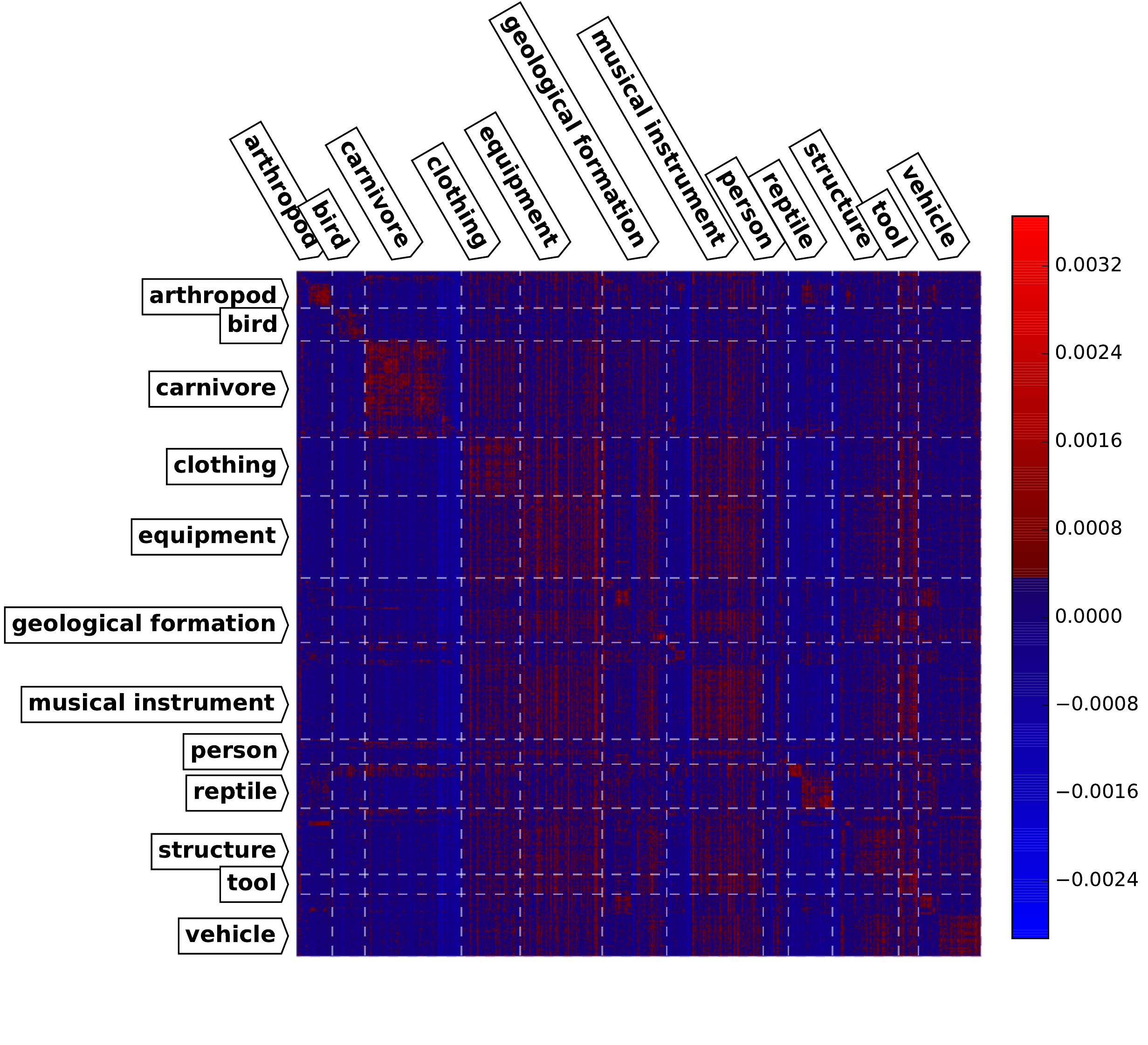}
  }\\
\end{figure*}

It is also instructive to compare exact \verb!GreedyTL! to the approximated one.
Figure~\ref{fig:greedytl59_semantic} pictures semantic matrices for the approximated version.
We note that
approximated version appears to be slightly more conservative in a small-sample case (2+10), but in overall, semantic patterns seem to match,
thus emphasizing the quality of the solution provided by the approximated version and empirically corroborating the theoretical motivation behind the randomized selection.
\begin{figure*}[!t]
  \caption{GreedyTL evaluated on Imagenet (DECAF7 features): a closer look at some strongly related sources and targets.}
  \label{fig:greedytl_semantic_zoom}
  \centering
  \includegraphics[height=12cm]{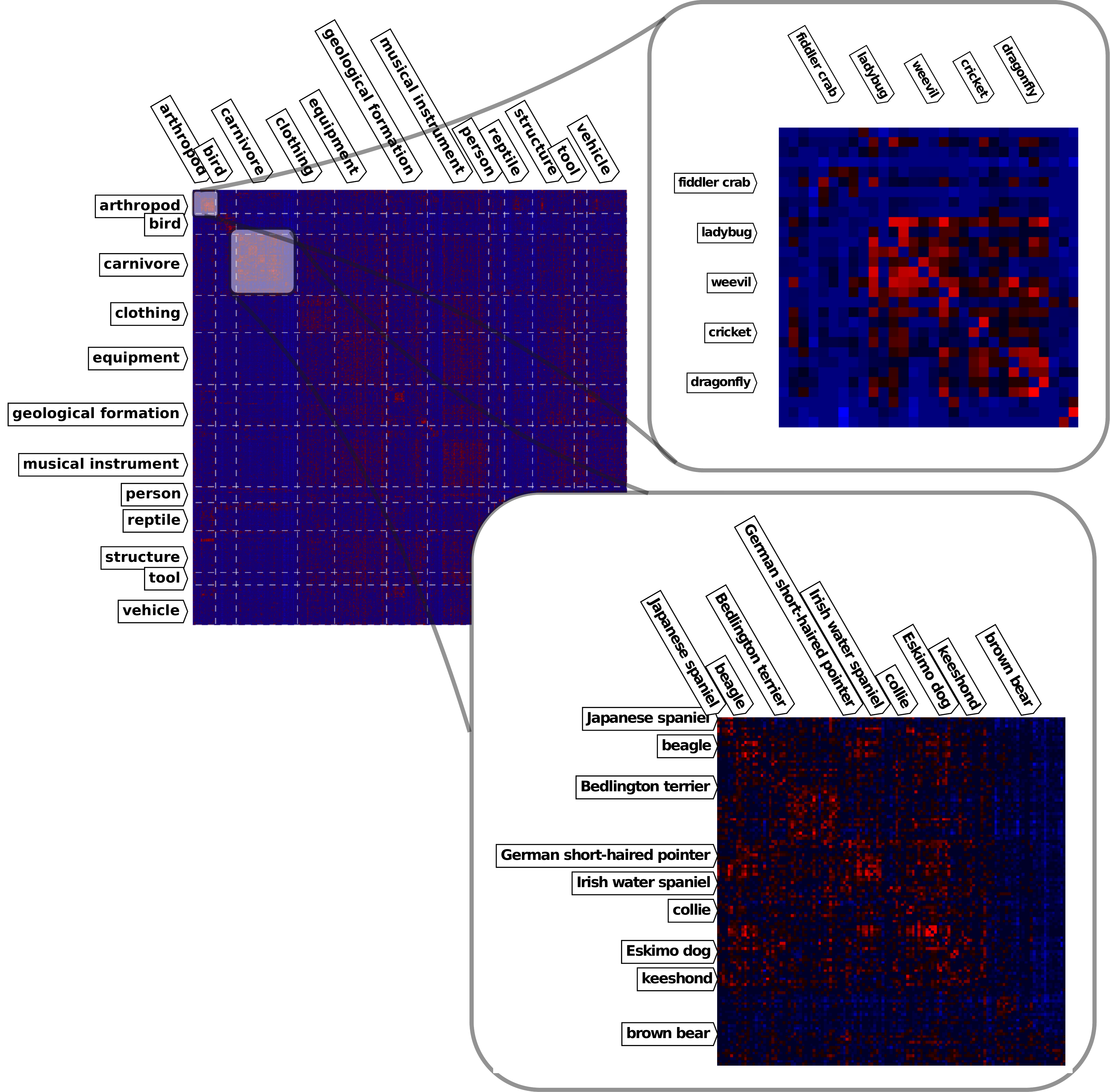}
\end{figure*}
\begin{figure*}[!t]
  \caption{Semantic transferrability matrix for the approximated GreedyTL evaluated on Imagenet (DECAF7 features).}
  \label{fig:greedytl59_semantic}
  \centering
  \setlength{\subfiglength}{5.7cm}
  \subfloat[]{
    \includegraphics[height=\subfiglength]{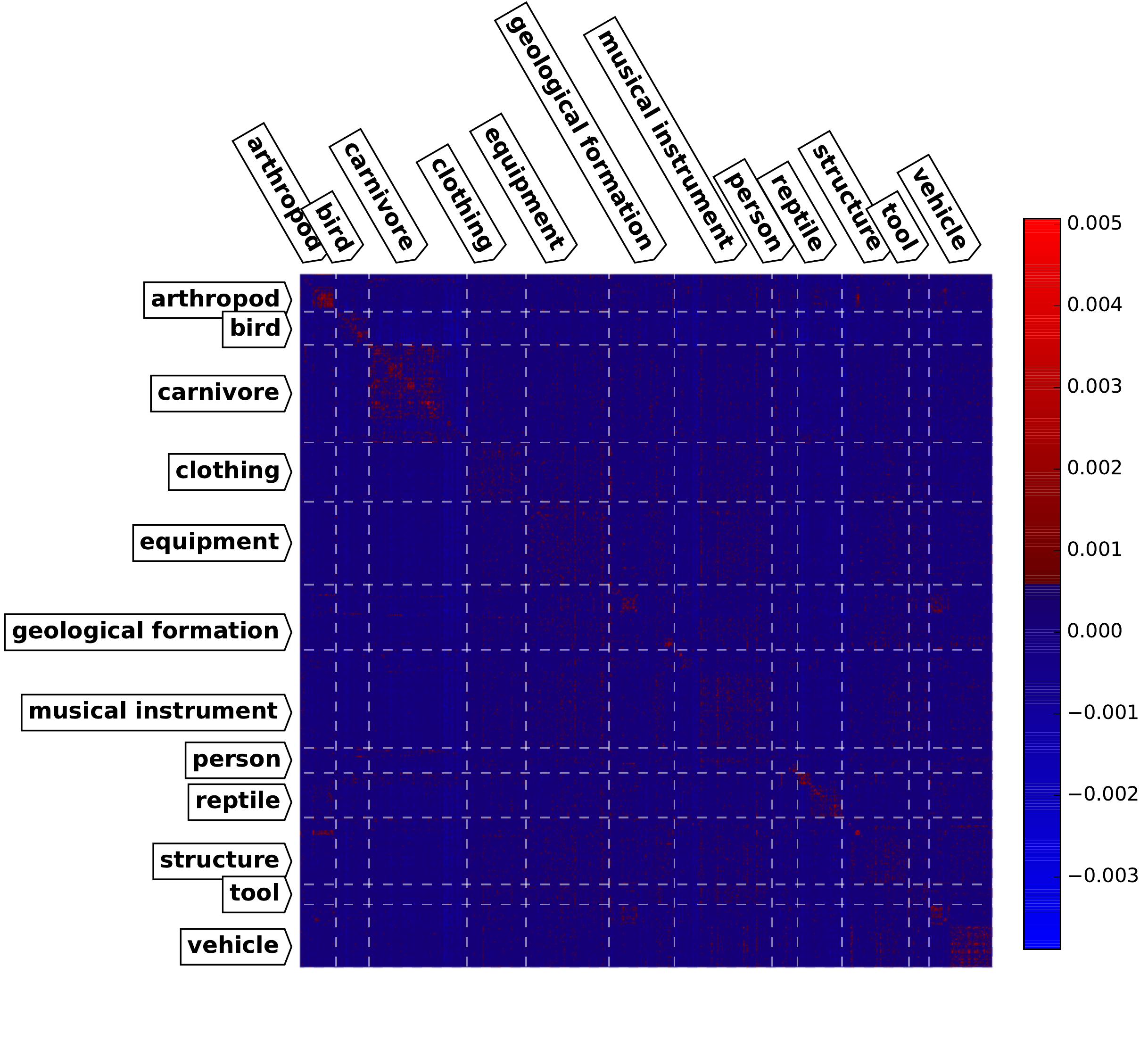}
  }%
  \subfloat[]{
    \includegraphics[height=\subfiglength]{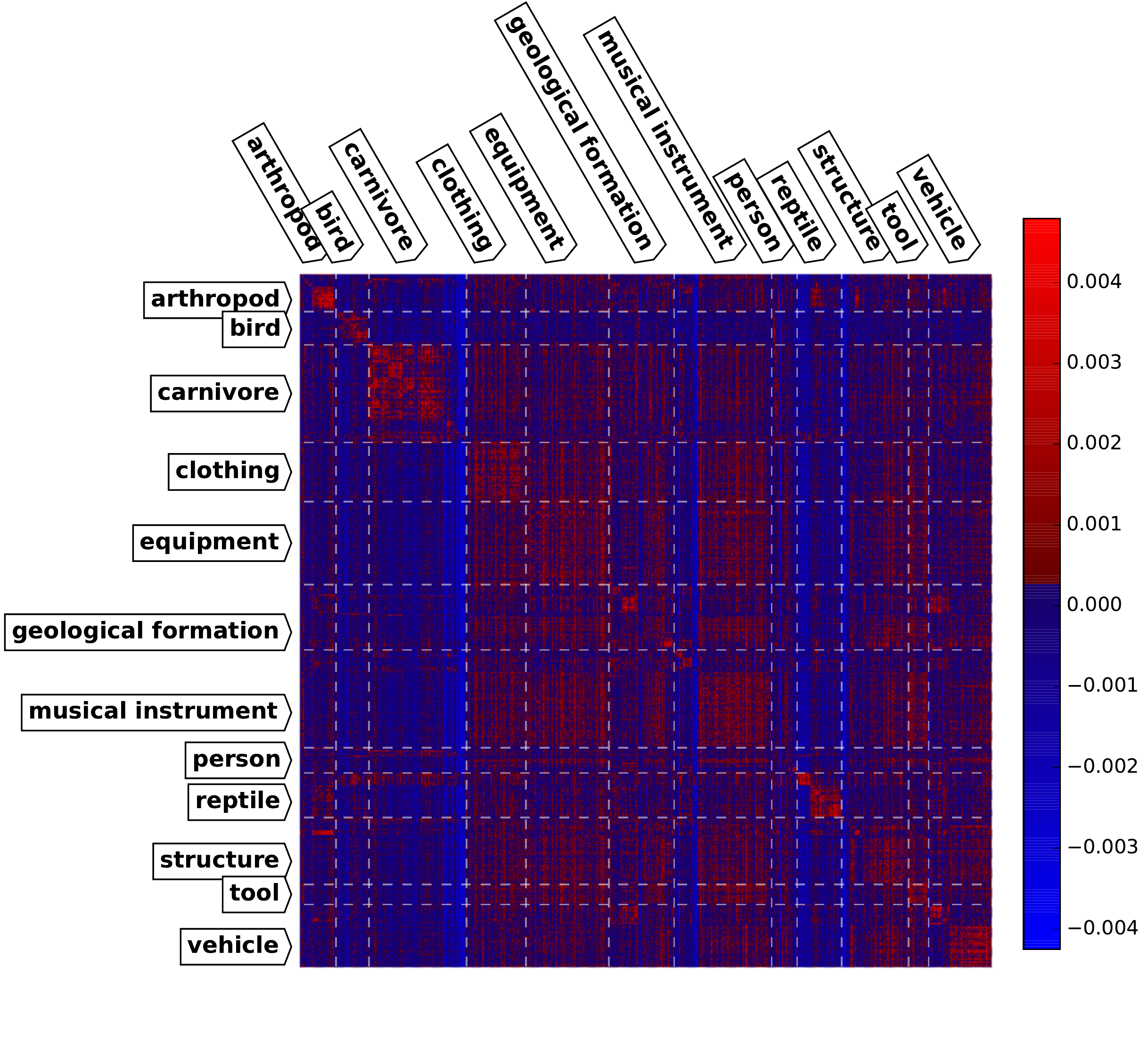}
  }\\
\end{figure*}
Finally, we take a closer look at some patterns of Figure ~\ref{fig:greedytl_ilsvrc_semantic_1}, that is in the case of learning from only $2$ positive examples. This new analysis is shown  in Figure~\ref{fig:greedytl_semantic_zoom}.
We notice that even at the smaller scale, there are emergent semantic patterns.

\vspace{-0.2cm}
\section{Conclusions}
\vspace{-0.1cm}
In this work we studied the transfer learning problem involving hundreds of sources.
The kind of transfer learning scenario we consider assumes no access to the source data directly, but through the use of the source hypotheses induced by them.
In particular, we focused on the efficient source hypothesis selection and combination, improving the performance on the target task.
We proposed a greedy algorithm, \verb!GreedyTL!, capable of selecting relevant sources and feature dimensions at the same time.
We verified these claims by obtaining the best results
among the competing feature selection and TL algorithms,
on the Imagenet, SUN09 and Caltech-256 datasets. At the same time, comparison against the non-regularized version of the algorithm clearly show the power of our intuition.
We support our empirical findings by
showing theoretically that under reasonable assumptions on the sources, the algorithm can learn effectively from few target examples.


%
\vspace{-0.5cm}
 \section*{Acknowledgments}
This work was partially supported by the ERC grant 367076 -RoboExNovo (B.C. and I. K.).
{\small
\bibliographystyle{unsrt}
\bibliography{learning}
}

\appendix
\section{Proofs}
In this section we present proofs of theorems.
%
For brevity, we define
$\bhsrc(\bx) := [\hsrc_1(\bx), \ldots, \hsrc_n(\bx)]\tp$,
and we will consider a truncated target predictor
\[
\htrg_{\bw, \bbeta}(\bx) := \scT\left(\bw\tp \bx + \bbeta\tp \bhsrc(\bx)\right)~,
\] with $\scT(a) := \min\{\max\{a, -1\}, 1\}$.
That said, we will assume that
\[
\Riskh(\htrg_{\bw, \bbeta}) \leq \frac{1}{m} \sum_{i=1}^m (\bw\tp \bx_i + \bbeta\tp \bhsrc(\bx_i) - y_i)^2~,
\] in other words, empirical risk of truncated predictor cannot be greater, since all the labels belong to $\{-1,1\}$.

To prove Theorem~\ref{thm:gen_no_approx} we need the following supplementary lemmas.
\begin{lemma}
  \label{lem:norm_bound}
  Let \verb!GreedyTL! generate solution $(\bhatw, \bhatbeta)$, given the training set $(\bX, \by)$, source hypotheses $\{\hsrc_i\}_{i=1}^n$, and hyperparameters $\lambda$ and $k$.
  Then we have that,
  \begin{equation*}
    \lambda \|\bhatw\|^2 + \lambda \|\bhatbeta\|^2 + \Riskh(\htrg_{\bhatw, \bhatbeta})  \leq \min_{|S| \leq k} \left\{ \frac{1}{|S|} \sum_{j \in S} \Riskh(\hsrc_j) + \frac{\lambda}{|S|} \right\}~,
  \end{equation*}
  \begin{equation*}
    \lambda \|\bhatw\|^2 + \lambda \|\bhatbeta\|^2 + \Riskh(\htrg_{\bhatw, \bhatbeta})  \leq \Riskh(\htrg_{\bzero, \bhatbeta})~.
  \end{equation*}
  and also,
  \[
  \lambda \|\bhatw\|^2 + \lambda \|\bhatbeta\|^2 + \Riskh(\htrg_{\bhatw, \bhatbeta}) \leq 1~.
  \]
\begin{proof}
    Define $J(\bw, \bbeta) := \Riskh(\htrg_{\bw, \bbeta}) + \lambda \|\bw\|^2 + \lambda \|\bbeta\|^2$.
    For any $\balpha \in \left\{0, \frac{1}{p}\right\}^n$ such that $\|\balpha\|_0 = p$ we have,
    {\small
    \begin{align}
      &J(\bhatw, \bhatbeta) \leq  J(\bzero, \balpha)
      = \frac{1}{m}\sum_{i=1}^m \ell\left(y_i, \frac{1}{p} \sum_{j \in \supp(\balpha)} \hsrc_j(\bx_i) \right) + \frac{\lambda}{p} \nonumber\\
      &\leq \frac{1}{p} \sum_{j \in \supp(\balpha)} \Riskh(\hsrc_j) + \frac{\lambda}{p}~\label{eq:norm_bound}.
    \end{align}
  }
    We have the last inequality due to Jensen's inequality.
    The fact that~\eqref{eq:norm_bound} holds for any $p \in \{1, \ldots, k\}$ proves the first statement.

    We have the second statement from,
    \begin{align*}
      &\Riskh(\htrg_{\bhatw, \bhatbeta}) + \lambda \|\bhatw\|^2 + \lambda \|\bhatbeta\|^2 \leq \Riskh(\htrg_{\bzero, \bhatbeta}) + \lambda \|\bhatbeta\|^2\\
      \Rightarrow &\Riskh(\htrg_{\bhatw, \bhatbeta}) \leq \Riskh(\htrg_{\bhatw, \bhatbeta}) + \lambda \|\bhatw\|^2 \leq \Riskh(\htrg_{\bzero, \bhatbeta}).
    \end{align*}

    The last statement comes from,
    \begin{align}
      \lambda \|\bhatw\|^2 + \lambda \|\bhatbeta\|^2 \leq  J(\bzero, \bzero) \leq 1~.
    \end{align}
\end{proof}
\end{lemma}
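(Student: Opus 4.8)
The plan is to carry everything through the convex surrogate $J(\bw,\bbeta) := \Riskh(\htrg_{\bw,\bbeta}) + \lambda\|\bw\|^2 + \lambda\|\bbeta\|^2$ and to rely on one structural fact about \texttt{GreedyTL}: it outputs the exact regularized least-squares minimizer of $J$ over all vectors whose support lies in the greedily selected index set $S$ (the closing assignment $w_i\gets\bz_i\tp\bG\by$ is exactly the dual form of that ridge solution, and the greedy score $\bb_S\tp((\bC+\lambda\bI)_S^{-1})\tp\bb_S$ equals the induced decrease of $J$). Since the search starts at $S=\varnothing$ and $S$ only grows, and enlarging the support cannot increase the minimum of $J$ over it, the returned value $J(\bhatw,\bhatbeta)$ is at most $J$ evaluated at every iterate met along the greedy path, and at every feasible point reachable within the budget $k$. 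I would also record once and for all that, since $y_i\in\{-1,1\}$, truncation does not increase the empirical square loss, so in each bound below $\Riskh(\htrg_{\bw,\bbeta})$ may be replaced by $\frac1m\sum_i(\bw\tp\bx_i+\bbeta\tp\bhsrc(\bx_i)-y_i)^2$ whenever convenient.

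Granting this, the third inequality is immediate: the initial iterate has $S=\varnothing$, so $J(\bhatw,\bhatbeta)\le J(\bzero,\bzero)=\Riskh(\htrg_{\bzero,\bzero})=\frac1m\sum_i y_i^2=1$. For the second, I would use that $(\bzero,\bhatbeta)$ is supported inside $S$, so minimality of $(\bhatw,\bhatbeta)$ over $S$ gives $J(\bhatw,\bhatbeta)\le J(\bzero,\bhatbeta)=\Riskh(\htrg_{\bzero,\bhatbeta})+\lambda\|\bhatbeta\|^2$; cancelling the shared $\lambda\|\bhatbeta\|^2$ and dropping $\lambda\|\bhatw\|^2\ge0$ leaves $\Riskh(\htrg_{\bhatw,\bhatbeta})\le\Riskh(\htrg_{\bzero,\bhatbeta})$, which is the content of the second bound.

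For the first inequality, for every source index set $S$ with $|S|=p\le k$ I would take the feasible comparator that is $\bzero$ on all feature coordinates and puts the uniform weight $1/p$ on the source coordinates indexed by $S$; write $\frac1p\bone_S$ for its $\bbeta$-part. It is $p$-sparse, hence feasible for the budget, so $J(\bhatw,\bhatbeta)\le J(\bzero,\frac1p\bone_S)=\Riskh(\htrg_{\bzero,\frac1p\bone_S})+\lambda/p$, the last term being $\lambda\|\frac1p\bone_S\|^2=\lambda\,p\,(1/p)^2$. The argument of $\scT$ inside $\htrg_{\bzero,\frac1p\bone_S}(\bx_i)$ is the uniform average of $\{\hsrc_j(\bx_i)\}_{j\in S}$, and the square loss is convex in its first slot, so Jensen's inequality gives $\Riskh(\htrg_{\bzero,\frac1p\bone_S})\le\frac1p\sum_{j\in S}\Riskh(\hsrc_j)$. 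Minimizing $\frac1p\sum_{j\in S}\Riskh(\hsrc_j)+\lambda/p$ over $S$ and over $p\in\{1,\dots,k\}$ produces the first inequality.

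The step to watch, and what I would single out as the genuine obstacle, is the inequality $J(\bhatw,\bhatbeta)\le J(\bzero,\frac1p\bone_S)$ for an \emph{arbitrary} $k$-sparse comparator rather than only for the support the greedy actually selects: it asks that the value returned by the greedy run be no larger than $J$ at any $k$-sparse feasible point. This is clear for $p=1$ (the first greedy pick already minimizes $J$ over all single-coordinate supports) and for $S$ equal to the selected support (immediate from minimality over that support together with the same Jensen step), but in general it is precisely the place where the forward-regression / subset-selection structure must be invoked, in the same spirit as, and consistent with, the approximation guarantee of Corollary~\ref{cor:tikhonov_fr_approx}. Everything else reduces to routine manipulation of the square loss and the $L2$ regularizer.
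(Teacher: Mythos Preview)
Your route is exactly the paper's: introduce $J(\bw,\bbeta)=\Riskh(\htrg_{\bw,\bbeta})+\lambda\|\bw\|^2+\lambda\|\bbeta\|^2$, compare against $(\bzero,\bzero)$ for the third claim, against $(\bzero,\bhatbeta)$ for the second, and against the uniform comparator $(\bzero,\tfrac1p\bone_S)$ followed by Jensen for the first. On strategy there is nothing different to report.

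Where you are actually more careful than the paper is in isolating the step $J(\bhatw,\bhatbeta)\le J(\bzero,\tfrac1p\bone_S)$ for an \emph{arbitrary} $S$ with $|S|=p\le k$ as the crux. You are right that this is not a consequence of $(\bhatw,\bhatbeta)$ being the ridge minimizer on the greedily selected support, nor of the monotone decrease of $J$ along the greedy path; it amounts to saying the greedy output dominates every $k$-sparse feasible point, which forward regression does not guarantee. The paper's proof, however, does nothing further at this point: it simply writes ``for any $\balpha\in\{0,\tfrac1p\}^n$ with $\|\balpha\|_0=p$ we have $J(\bhatw,\bhatbeta)\le J(\bzero,\balpha)$'' and moves on. No appeal to Corollary~\ref{cor:tikhonov_fr_approx}, no extra structural argument. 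So the obstacle you single out is left unaddressed in the paper's own proof as well; you are not missing a trick they used. Note also that your suggested fix via Corollary~\ref{cor:tikhonov_fr_approx} would not recover the inequality as stated, since that corollary only yields $(1+\epsilon)$-closeness to the regularized optimum, not domination of every $k$-sparse comparator.

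One minor point you and the paper share: after cancelling $\lambda\|\bhatbeta\|^2$ from $J(\bhatw,\bhatbeta)\le J(\bzero,\bhatbeta)$ you obtain $\Riskh(\htrg_{\bhatw,\bhatbeta})+\lambda\|\bhatw\|^2\le\Riskh(\htrg_{\bzero,\bhatbeta})$, which is slightly weaker than the lemma's second display (it keeps $\lambda\|\bhatbeta\|^2$ on the left). The paper's derivation lands at the same weaker inequality.
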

\begin{lemma}
  \label{lem:norm_bound_opt}
  Let $(\bw^\star, \bbeta^\star)$ be the optimal solution to~\eqref{def:htl_subset_selection}, given the training set $(\bX, \by)$, source hypotheses $\{\hsrc_i\}_{i=1}^n$, and hyperparameters $\lambda$ and $k$.
  Then, the following holds,
  \begin{align*}
    &\lambda \|\bw^\star\|^2 + \lambda \|\bbeta^\star\|^2 + \Riskh(\htrg_{\bw^\star, \bbeta^\star})\\
    &\quad\leq \min_{|S| \leq k} \left\{ \frac{1}{|S|} \sum_{j \in S} \Riskh(\hsrc_j) + \frac{\lambda}{|S|} \right\}~.
  \end{align*}
\end{lemma}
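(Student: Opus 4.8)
The plan is to exploit the very definition of $(\bw^\star, \bbeta^\star)$ as a constrained minimizer and to compare it against a carefully chosen feasible point, in exactly the spirit of the first inequality of Lemma~\ref{lem:norm_bound}. Write $J(\bw, \bbeta) := \Riskh(\htrg_{\bw, \bbeta}) + \lambda \|\bw\|^2 + \lambda \|\bbeta\|^2$. By construction $(\bw^\star, \bbeta^\star)$ minimizes $J$ over all pairs satisfying $\|\bw\|_0 + \|\bbeta\|_0 \leq k$, so any feasible choice of $(\bw, \bbeta)$ furnishes an upper bound on $J(\bw^\star, \bbeta^\star)$.

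First I would fix an arbitrary index set $S \subseteq \{1, \ldots, n\}$ with $|S| = p \leq k$ and take the comparison point $(\bzero, \balpha)$, where $\alpha_j = \tfrac{1}{p}$ for $j \in S$ and $\alpha_j = 0$ otherwise. This pair is feasible since $\|\bzero\|_0 + \|\balpha\|_0 = p \leq k$, hence $J(\bw^\star, \bbeta^\star) \leq J(\bzero, \balpha)$. Next I would bound $J(\bzero, \balpha)$ from above: using the truncation inequality recorded at the start of this appendix (the truncated predictor never increases the square loss, since the labels lie in $\{-1,1\}$) together with the convexity of $a \mapsto (y-a)^2$ and Jensen's inequality applied to the uniform average $\tfrac{1}{p}\sum_{j \in S}\hsrc_j(\bx_i)$, we get
\[
J(\bzero, \balpha) \leq \frac{1}{m}\sum_{i=1}^m \ell\!\left(y_i, \frac{1}{p}\sum_{j \in S}\hsrc_j(\bx_i)\right) + \frac{\lambda}{p} \leq \frac{1}{p}\sum_{j \in S}\Riskh(\hsrc_j) + \frac{\lambda}{p}~,
\]
where the last step also uses $\lambda\|\balpha\|^2 = \lambda p / p^2 = \lambda/p$. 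Chaining the two bounds yields $J(\bw^\star, \bbeta^\star) \leq \tfrac{1}{p}\sum_{j \in S}\Riskh(\hsrc_j) + \tfrac{\lambda}{p}$ for this particular $S$.

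Finally, since the index set $S$ and its size $p \leq k$ were arbitrary, I would minimize the right-hand side over all $S$ with $|S| \leq k$, which is exactly $\min_{|S| \leq k}\{\tfrac{1}{|S|}\sum_{j \in S}\Riskh(\hsrc_j) + \tfrac{\lambda}{|S|}\}$, giving the claim. I do not anticipate a genuine obstacle: the argument is essentially the one already used for the GreedyTL solution in Lemma~\ref{lem:norm_bound}, the only points needing a line of care being (i) verifying that the comparison point $(\bzero, \balpha)$ is admissible for the $\ell_0$ constraint, and (ii) invoking the truncation bound so that the passage from $\htrg_{\bzero,\balpha}$ to the untruncated linear combination — on which Jensen is applied — is valid.
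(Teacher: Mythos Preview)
Your proposal is correct and follows essentially the same argument as the paper: define $J(\bw,\bbeta)$, compare $(\bw^\star,\bbeta^\star)$ against the feasible point $(\bzero,\balpha)$ with $\alpha_j = 1/p$ on a chosen index set, apply Jensen's inequality to pull the average outside the loss, and then optimize over $S$. If anything, your write-up is slightly more explicit about the feasibility check and the role of truncation in passing to the untruncated linear combination before applying Jensen; the paper's proof leaves these implicit.
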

\begin{proof}
    Define $J(\bw, \bbeta) := \Riskh(\htrg_{\bw, \bbeta}) + \lambda \|\bw\|^2 + \lambda \|\bbeta\|^2$.
    For any $\balpha \in \left\{0, \frac{1}{p}\right\}^n$ such that $\|\balpha\|_0 = p$ we have,
    {\small
    \begin{align}
      &J(\bw^\star, \bbeta^\star) \leq  J(\bzero, \balpha)
      = \frac{1}{m}\sum_{i=1}^m \ell\left(y_i, \frac{1}{p} \sum_{j \in \supp(\balpha)} \hsrc_j(\bx_i) \right) + \frac{\lambda}{p} \nonumber\\
      &\leq \frac{1}{p} \sum_{j \in \supp(\balpha)} \Riskh(\hsrc_j) + \frac{\lambda}{p}~\label{eq:norm_bound}.
    \end{align}
  }
    We have the last inequality due to Jensen's inequality.
    The fact that~\eqref{eq:norm_bound} holds for any $p \in \{1, \ldots, k\}$ proves the statement.

\end{proof}
%
\begin{proof}[Proof of Theorem \ref{thm:gen_no_approx}]
To prove the statement we will use the optimistic rate Rademacher complexity bounds of~\cite{SrebroST10}.
In particular, we will have to do two things: upper-bound the worst-case Rademacher complexity of the hypothesis class of \verb!GreedyTL!,
and upper-bound the empirical risk of members of that hypothesis class.
Before proceeding, we spend a moment to define the loss class of \verb!GreedyTL!, assuring that it is consistent with the definition by~\cite{SrebroST10},
{\small
\begin{equation}
\sL := \left\{ (\bx, y) \mapsto  \frac{1}{2} \left(h(\bx) - y\right)^2 : h \in (\scT \circ \sH), \ \Riskh(h) \leq r \right\}~.~\label{eq:erm_loss_class}
\end{equation}
}
Here, $(\scT \circ \sH)$ is the class of truncated hypotheses, $\sH$ is the hypothesis class of \verb!GreedyTL! and $r$ is the mentioned bound on the empirical risk.
We define the hypothesis class as,
\[
\sH := \left\{ \bx \mapsto \bw\tp \bx + \bbeta\tp \bhsrc(\bx) : \|\bw\|^2_2 + \|\bbeta\|^2_2 \leq \frac{1}{\lambda} \right\}~.
\]
In this definition we have used the fact shown in Lemma~\ref{lem:norm_bound}, that is the constraint on $\|\bw\|_2^2 + \|\bbeta\|_2^2$, which translates into a constraint on the hypothesis class.
Now we are ready to analyze its complexity.

Recall that the worst case Rademacher complexity is defined as,
\begin{equation*}
\Rad(\sF) := \sup_{\bx_1, \ldots, \bx_m \in \sX} \left\{ \E_{\bsigma}\left[ \sup_{f \in \sF}\left\{ \frac{1}{m} \sum_{i=1}^m \sigma_i f(\bx_i) \right\} \right] \right\}~,
\end{equation*}
where $\sigma_i$ is r.v., such that $\mathbb{P}(\sigma_i=1) = \mathbb{P}(\sigma_i=-1) = \frac{1}{2}$.

Let us focus on the analysis of empirical Rademacher complexity $\Radh(\scT \circ \sH)$, that is the part inside the outer supremum.
The truncation $\scT()$  is $1$-Lipschitz, therefore by Talagrand's contraction lemma~\cite{mohri2012foundations} we have that $\Radh(\scT \circ \sH) \leq \Radh(\sH)$.
Hence, now we proceed with an upper-bound on $\Radh(\sH)$.
Define $\biota \in \{0,1\}^n$ such that $\iota_i := \bigg\{ \begin{array}{cc}1,~~ & i \in \supp(\bbeta) \\ 0,~~ & \text{otherwise} \end{array}$.
Then we have that,
\begin{align}
&\Radh(\scT \circ \sH) \leq \Radh(\sH)\\
&\quad= \MEbr{\bsigma}{\sup_{ \|\bw\|^2_2 + \|\bbeta\|^2_2 \leq \frac{1}{\lambda} } \frac{1}{m} \sum_{i=1}^m \sigma_i ( \bw\tp \bx_i + \bbeta\tp \bhsrc(\bx_i) ) } \nonumber\\
&\quad = \frac{1}{m \sqrt{\lambda}} \MEbr{\bsigma}{\left\| \sum_{i=1}^m \sigma_i \left[\begin{array}{c}\bx_i\\\biota \circ \bhsrc(\bx_i)\end{array}\right] \right\|} \label{eq:radh_cs}\\
&\quad \leq \sqrt{\frac{1}{m^2 \lambda} \sum_{i=1}^m \|\bx_i\|^2 + \|\biota \circ \bhsrc(\bx_i)\|^2} \label{eq:jensen_sigma}\\
&\quad \leq \sqrt{\frac{1 + k \|\bhsrc\|^2_\infty}{\lambda m}  }~. \label{eq:radh_last}
\end{align}
To obtain~\eqref{eq:radh_cs} we have applied Cauchy-Schwartz inequality on the inner product of $[\bw\tp~ \bbeta\tp]\tp$ and $[\bx_i\tp~ \bhsrc(\bx_i)\tp]\tp$, then upper-bounding norms with constraints given by definition of a class $\sH$.
To get~\eqref{eq:jensen_sigma} we have applied Jensen's inequality w.r.t. $\MEbr{}{\cdot}$, along with the fact that $\MEbr{}{\sigma_i \sigma_{j\neq i}} = 0$ and $\MEbr{}{\sigma_i \sigma_i} = 1$.
Next, we have bounded the $L2$ norms of features and sources, recalling that by assumption, $\|\bx_i\|^2 \leq 1$.
Finally, taking supremum over~\eqref{eq:radh_last} w.r.t. data, we obtain,
\[
\Rad(\scT \circ \sH) \leq \sqrt{\frac{1 + k \|\bhsrc\|^2_\infty}{\lambda m}  }~.
\]

Next, we upper bound the empirical risk of the members of $\sH$ by Lemma~\ref{lem:norm_bound}.
By plugging the bound on the $\Rad(\sH)$, and the bound on the empirical risk of~\eqref{eq:erm_loss_class} into Theorem~1 in~\cite{SrebroST10} we have the statement. 
\end{proof}

Next we prove the approximation guarantee of a \ac{RSS}, Corollary~\ref{cor:tikhonov_fr_approx}, that is needed for proof of Theorem~\ref{thm:gen_approx}.
First we note that the solution returned by~\ac{FR} enjoys the following guarantees in solving the Subset Selection.
\begin{theorem}[\cite{das2008algorithms}]
\label{thm:fr_approx}
Assume that $\bC$ and $\bb$ are normalized,
and $C_{i,j\neq i} \leq \gamma < \frac{1}{6 k}$ for subset size $k \leq n$.
Then, the \ac{FR} algorithm generates an approximate solution $\bhatw$ to the Subset Selection such that,
$\Risk(\bhatw) \leq (1 + 16(k + 1)^2 \gamma) \min_{\|\bw\|_0 = k} \Risk(\bw)~.$
\end{theorem}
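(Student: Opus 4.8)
This is the Das--Kempe forward-regression guarantee, so the proof I would give is the ``perturbation of the orthogonal case'' argument: treat the incoherence bound $\gamma$ as a small perturbation of the fully uncorrelated situation $\bC=\bI$, for which forward regression is \emph{exactly} optimal, and show that switching on $\bC=\bI+\bN$ (with $\bN$ hollow and $\|\bN\|_{\max}\le\gamma$) degrades the returned solution only by a multiplicative factor $1+\scO(k^2\gamma)$.

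First I would record the baseline. When $\bC=\bI$, minimizing $\Risk(\bw)$ over $\supp(\bw)\ss S$ gives value $1-\sum_{i\in S}b_i^2$, so the marginal gain of adjoining a coordinate $i$ equals exactly $b_i^2$, independent of the current set; hence forward regression adjoins the $k$ coordinates of largest $b_i^2$ and returns the true minimizer, and the claim holds with zero slack. Next I would quantify conditioning: by Gershgorin, every principal submatrix $\bC_T$ with $|T|\le 2k$ has spectrum in $[1-(|T|-1)\gamma,\,1+(|T|-1)\gamma]$, which $\gamma<\tfrac1{6k}$ keeps uniformly well conditioned, so $\bC_T^{-1}$ has a convergent Neumann expansion $\bI-\bN_T+\bN_T^2-\cdots$ whose $j$-th term is $\scO((2k\gamma)^j)$ in operator norm, and in particular $\|\bb_S\|_2=\scO(1)$ for $|S|\le k$ (since $\bb_S\tp\bC_S^{-1}\bb_S\le1$).

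The core step is to push these estimates through the greedy recursion. The gain of adjoining $i\notin S$ is the squared residual covariance of $X_i$ against $S$ over its conditional variance; writing the cross-terms through $\bN$ and using the Neumann bounds, this gain equals $b_i^2+\scO(\gamma\sqrt{|S|}\,|b_i|)+\scO(k^2\gamma^2)$ uniformly over candidates $i$. Thus at every step forward regression picks a coordinate whose $b^2$ is within an additive $\scO(k\gamma)$ of the largest available one; comparing, at step $t$, with the coordinate of $t$-th largest $b^2$ and using $\sum_{t\le k}|b_{i_t}|=\scO(\sqrt k)$ by Cauchy--Schwarz, the sum of per-step losses gives $\sum_{i\in S^\star}b_i^2-\sum_{i\in S_{\mathrm{FR}}}b_i^2=\scO(k^2\gamma)$ for an optimal set $S^\star$, and re-inserting the Neumann corrections to pass back from $\sum b_i^2$ to the true $R^2$ values yields $\Risk(\bhatw)-\min_{\|\bw\|_0=k}\Risk(\bw)\le 16(k+1)^2\gamma$ after pinning down the constants --- the $(k+1)^2$ being exactly the price of $\le k$ iterations, each carrying an $\scO(k)$-sized perturbation.

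The last step, which I expect to be the main obstacle, is upgrading this \emph{additive} bound into the \emph{multiplicative} one, i.e.\ replacing ``$\le 16(k+1)^2\gamma$'' by ``$\le 16(k+1)^2\gamma\cdot\min_{\|\bw\|_0=k}\Risk(\bw)$''. The crude estimate $\min\Risk\le1$ is too lossy; one instead needs a \emph{relative} perturbation analysis showing the discrepancy $R^2(S^\star)-R^2(S_{\mathrm{FR}})$ is controlled by $\scO(k^2\gamma)$ times the residual $1-R^2(S^\star)$ itself --- morally, because under $\gamma<\tfrac1{6k}$ a set of $k$ near-orthonormal features cannot contain two (near-)equally good candidates, the greedy choice at each step is ``unambiguously'' optimal up to a relative error, so the loss scales with the unexplained variance rather than absolutely. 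Carrying this relative bound through the accounting of the previous paragraph --- everything else being routine manipulation of the Neumann series on $\le 2k$-dimensional principal submatrices --- then gives the stated inequality.
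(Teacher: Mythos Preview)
The paper does not prove this theorem. It is quoted verbatim from the cited reference \cite{das2008algorithms} (Das and Kempe, 2008) and is used as a black box to derive Corollary~\ref{cor:tikhonov_fr_approx}. There is therefore nothing in the paper to compare your argument against.

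As for your sketch itself: the overall strategy --- perturbation off the orthonormal case via Gershgorin/Neumann on $\le 2k$-sized principal submatrices, then tracking per-step greedy errors --- is indeed the spirit of the Das--Kempe analysis. However, your final paragraph correctly identifies, and then does not actually resolve, the real issue. You obtain an \emph{additive} gap $\Risk(\bhatw)-\OPT \le 16(k+1)^2\gamma$ and then assert that a ``relative perturbation analysis'' upgrades this to the multiplicative form $\Risk(\bhatw)\le(1+16(k+1)^2\gamma)\,\OPT$. The heuristic you offer (``the greedy choice is unambiguously optimal up to a relative error, so the loss scales with the unexplained variance'') is not an argument: nothing in your per-step gain expansion $b_i^2+\scO(\gamma\sqrt{|S|}\,|b_i|)+\scO(k^2\gamma^2)$ ties the error terms to $1-R^2(S^\star)$ rather than to absolute quantities. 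To get the multiplicative bound one must show that each per-step loss is itself controlled by $\gamma$ times the \emph{current residual variance}, not by $\gamma$ times an absolute constant; this requires analyzing the residual covariances (the $b_i$ after projecting out $S$) rather than the raw $b_i$, and that is where the real work in \cite{das2008algorithms} lies. As written, your argument proves only the weaker additive statement.
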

This theorem is instrumental in stating our corollary.
%

\begin{proof}[Proof of Corollary~\ref{cor:tikhonov_fr_approx}]
  In addition to the sample coviance matrix $\bhatC$, define also correlations $\bb := \frac{1}{m} \bX\tp \by$.
  Denote $\bhatC' = \frac{\bhatC + \lambda \bI}{1 + \lambda}$.
  Now, suppose that $\bhatw_S$ is the solution found by the forward regression algorithm, given the input $(\bhatC', \bhatb, k)$.
  So, the empirical risk that the algorithm attains is $1 - \bhatb_S\tp (\bhatC_S')^{-1} \bhatb_S$, as follows from the analytic solution to empirical risk minimization for given $S$.
  In fact, we can upper-bound it right away using Theorem~\ref{thm:fr_approx}.
  But, recall that our goal is to upper-bound the quantity $\Riskh(\bhatw) + \lambda \|\bhatw\|^2 = 1 - \bhatb_S \tp (\bhatC_S + \lambda \bI)^{-1} \bhatb_S$, that is the regularized empirical risk of the approximation $\bhatw_S$ to the \ac{RSS}.
  This quantity is obtained via the unnormalized covariance matrix, therefore we cannot analyze it directly by Theorem~\ref{thm:fr_approx}.
  For this reason we rewrite it as $\Riskh(\bhatw) + \lambda \|\bhatw\|^2 = 1 - \frac{1}{1 + \lambda}\bhatb_S\tp\left(\bhatC_S'\right)^{-1} \bhatb_S$.
  From Theorem~\ref{thm:fr_approx} assumptions we then have $(\hat{C}_S)'_{i,j\neq i} \leq \gamma' \leq \frac{1}{6k}$, denote $\epsilon = 16(k+1)^2 \gamma'$, and let $S^\star$ be the optimal subset of size $k$.
  Now we plug $1 - \bhatb_S\tp (\bhatC_S')^{-1} \bhatb_S$ into Theorem~\ref{thm:fr_approx}, and proceed with algebraic transformations,
  \begin{align}
    &1 - \bhatb_S\tp (\bhatC_S')^{-1} \bhatb_S \leq (1 + \epsilon) (1 - \bhatb_{S^\star}\tp (\bhatC_{S^\star}')^{-1} \bhatb_{S^\star}) \nonumber\\
    &\Rightarrow \frac{1}{1 + \lambda} ( 1 - \bhatb_S\tp (\bhatC_S')^{-1} \bhatb_S ) \leq \frac{1 + \epsilon}{1 + \lambda} (1 - \bhatb_{S^\star}\tp (\bhatC_{S^\star}')^{-1} \bhatb_{S^\star}) \nonumber\\
    &\Rightarrow 1 - \frac{1}{1 + \lambda} \bhatb_S\tp (\bhatC_S')^{-1} \bhatb_S\\
    &\quad \leq (1 + \epsilon) \left(\frac{1}{1 + \lambda} - \frac{1}{1 + \lambda} \bhatb_{S^\star}\tp (\bhatC_{S^\star}')^{-1} \bhatb_{S^\star}\right) + \frac{\lambda}{1 + \lambda} \nonumber \\
    &\Rightarrow 1 - \frac{1}{1 + \lambda} \bhatb_S\tp (\bhatC_S')^{-1} \bhatb_S\\
    &\quad \leq (1 + \epsilon) \left(1 - \frac{1}{1 + \lambda} \bhatb_{S^\star}\tp (\bhatC_{S^\star}')^{-1} \bhatb_{S^\star}\right) - \frac{\epsilon \lambda}{1 + \lambda}~.\nonumber
  \end{align}
%
  The last step is to relate $\gamma'$ to $\gamma$.
  The fact $(\hat{C}_S)'_{i,j\neq i} \leq \gamma' \leq \frac{1}{6k}$ is equivalent to $\frac{(\hat{C}_S)_{i,j\neq i}}{1 + \lambda} \leq \gamma' \leq \frac{1}{6k}$.
  Therefore we can set $\gamma = \gamma' (1+\lambda)$ and obtain $(\hat{C}_S)_{i,j\neq i} \leq \gamma \leq \frac{1 + \lambda}{6k}$.
  This concludes the proof.
  \end{proof}

\begin{proof}[Proof of Theorem~\ref{thm:gen_approx}]
The proof follows the composition of Theorem~\ref{thm:gen_no_approx}, Corollary~\ref{cor:tikhonov_fr_approx} and Lemma~\ref{lem:norm_bound_opt}.
In particular, we upper-bound the empirical risk of Theorem~\ref{thm:gen_no_approx} with an approximation given by Corollary~\ref{cor:tikhonov_fr_approx}, ignoring the negative term.
Next, we upper-bound $\epsilon(\lambda \|\bw^{\star}\|^2 + \lambda \|\bbeta^{\star}\|^2+ \Riskh(\htrg_{\bw^\star, \bbeta^\star})) + \lambda \|\bw^{\star}\|^2 + \lambda \|\bbeta^{\star}\|^2$ by Lemma~\ref{lem:norm_bound_opt}.
\end{proof}

The following proposition is used to derive the \verb!GreedyTL! in Section~\ref{sec:derivation}.
\begin{prop}
  \label{prop:rls_acc}
  Define the regularized accuracy as,
  \[
  \Arh(\bw) := 1 - \left(\frac{1}{m} \|\bX\tp \bw - \by\|_2^2 + \lambda \|\bw\|_2^2\right)~.
  \]
  We are given $\bX \in \reals^{n \times m}$, $\by \in \reals^m$, $S \subseteq \{1, \ldots, n\}$, and $\lambda \in \reals^+$.
  Furthermore, assume that $\frac{\|\by\|_2^2}{m} = 1$, and let
  $\bhatX$ be the submatrix of $\bX$, selecting rows indexed by $S$.
  Then we have that,
  \begin{align}
    \max_{\bw, \supp(\bw) = S} \left\{\Arh(\bw)\right\} &= \frac{1}{m}\by\tp \bhatX\tp (\bhatX \bhatX\tp + m \lambda \bI)^{-1} \bhatX \by \label{eq:A_emp_primal}\\
    &= \frac{1}{m} \by\tp (\bhatX\tp \bhatX + m \lambda \bI)^{-1} \bhatX\tp \bhatX \by~\label{eq:A_emp_dual}.
  \end{align}
  \begin{proof}
    Expanding the $\|\cdot\|^2$ in $\Arh(\bw)$ and using the fact that $\frac{\|\by\|^2}{m} = 1$, gives us
    \[
    \Arh(\bw) = \frac{2}{m} \bw\tp \bhatX \by - \frac{1}{m} \bw\tp (\bhatX \bhatX\tp + m \lambda \bI) \bw~.
    \]
    Now we have that $\frac{\partial \Arh(\bw)}{\partial \bw} = 0 \Rightarrow \bw = (\bhatX \bhatX\tp + m \lambda \bI)^{-1} \bhatX \by$.
    Denote $\bG = (\bhatX \bhatX\tp + m \lambda \bI)^{-1}$ and set optimal solution $\bw^\star = \bG \bhatX \by$.
    By putting $\bw^\star$ into the objective we have,
    \begin{align*}
    \Arh(\bw^\star) &= \frac{2}{m} \by\tp \bhatX\tp \bG\tp \bhatX \by - \frac{1}{m} \by\tp \bhatX\tp \bG\tp \bG^{-1} \bG \bhatX \by\\
                   &= \frac{1}{m} \by\tp \bhatX\tp \bG\tp \bhatX \by~.
    \end{align*}
    This proves the first statement.

    Now we turn to the second statement, that is solution in the dual variables.
    By using dual variable identity $(\bhatX \bhatX\tp + m \lambda \bI)^{-1} \bhatX = \bhatX (\bhatX\tp \bhatX + m \lambda \bI)^{-1}$~\cite{mohri2012foundations}, we write solution w.r.t. $\bw$ as
    $\bw = \bhatX (\bhatX\tp \bhatX + m \lambda \bI)^{-1} \by$.
    Denoting $\bG = (\bhatX\tp \bhatX + m \lambda \bI)^{-1}$, setting optimal solution $\bw^\star = \bhatX \bG \by$, and putting $\bw^\star$ into the objective we have,
    \begin{align*}
      &\Arh(\bw^\star) = \frac{2}{m} \by\tp \bG\tp \bhatX\tp \bhatX \by\\
      &- \frac{1}{m} \by\tp \bG\tp \bhatX\tp (\bhatX \bhatX\tp + m \lambda \bI) \bhatX \bG \by
                       = \frac{1}{m} \by\tp \bG \bhatX\tp \bhatX \by~.
    \end{align*}
    The last fact comes from the observation that $\bhatX \bG = (\bhatX \bhatX\tp + m \lambda \bI)^{-1} \bhatX$ by dual variable identity.
    This concludes the proof of the second statement.
  \end{proof}
\end{prop}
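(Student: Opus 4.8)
The plan is to reduce the support-constrained problem to an unconstrained concave quadratic in the active coordinates, solve it in closed form, and then rewrite the optimal value in the two equivalent ways using a resolvent identity.

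First I would observe that the constraint $\supp(\bw) = S$ makes $\bX\tp \bw$ depend only on the rows of $\bX$ indexed by $S$: writing $\bhatw$ for the subvector $\bw_S \in \reals^{|S|}$ and $\bhatX$ for the corresponding row-submatrix, we have $\bX\tp\bw = \bhatX\tp\bhatw$ and $\|\bw\|_2^2 = \|\bhatw\|_2^2$. Hence maximizing $\Arh$ over $\bw$ with support in $S$ is the same as maximizing over $\bhatw \in \reals^{|S|}$ with no constraint. Expanding the squared norm and using the normalization $\|\by\|_2^2/m = 1$ to cancel the resulting constant, one gets
\[
\Arh(\bhatw) = \frac{2}{m}\bhatw\tp\bhatX\by - \frac{1}{m}\bhatw\tp(\bhatX\bhatX\tp + m\lambda\bI)\bhatw .
\]
Since $\lambda > 0$, the matrix $\bhatX\bhatX\tp + m\lambda\bI$ is positive definite, so this objective is strictly concave and its unique maximizer is obtained by setting the gradient to zero, namely $\bhatw^\star = (\bhatX\bhatX\tp + m\lambda\bI)^{-1}\bhatX\by$.

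Substituting $\bhatw^\star$ back, and writing $\bG := (\bhatX\bhatX\tp + m\lambda\bI)^{-1}$ (symmetric), the linear and quadratic terms combine: the quadratic term becomes $\tfrac1m\by\tp\bhatX\tp\bG\,\bG^{-1}\,\bG\bhatX\by = \tfrac1m\by\tp\bhatX\tp\bG\bhatX\by$, which is exactly half of the linear term, leaving $\Arh(\bhatw^\star) = \tfrac1m\by\tp\bhatX\tp\bG\bhatX\by$; this is the primal expression~\eqref{eq:A_emp_primal}. To obtain the dual form~\eqref{eq:A_emp_dual}, I would invoke the standard push-through identity $(\bhatX\bhatX\tp + m\lambda\bI)^{-1}\bhatX = \bhatX(\bhatX\tp\bhatX + m\lambda\bI)^{-1}$, substitute it into the primal expression, and then use that $\bhatX\tp\bhatX$ commutes with $(\bhatX\tp\bhatX + m\lambda\bI)^{-1}$ to rearrange into $\tfrac1m\by\tp(\bhatX\tp\bhatX + m\lambda\bI)^{-1}\bhatX\tp\bhatX\by$.

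There is no genuine obstacle in this proof — it is elementary linear algebra throughout — so the only points that require care are (i) the reduction to the active subvector $\bhatw$, which is what turns a combinatorially-constrained problem into a plain unconstrained quadratic, and (ii) applying the push-through identity correctly when passing from the primal to the dual form. Everything else is differentiation of a concave quadratic and back-substitution.
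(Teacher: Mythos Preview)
Your proposal is correct and follows essentially the same approach as the paper: expand the quadratic, solve the first-order condition, substitute back for the primal value, and invoke the push-through identity for the dual form. The only minor difference is that for~\eqref{eq:A_emp_dual} the paper rewrites $\bhatw^\star$ itself in dual form and re-substitutes into the objective, whereas you apply the identity directly to the already-computed primal value; your route is slightly shorter but the content is the same.
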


\end{document}